\def\figref#1{figure~\ref{#1}}
\def\Figref#1{Figure~\ref{#1}}
\def\secref#1{section~\ref{#1}}
\def\Secref#1{Section~\ref{#1}}
\def\eqref#1{equation~\ref{#1}}
\def\Eqref#1{Equation~\ref{#1}}
\def\1{\bm{1}}
\DeclareMathAlphabet{\mathsfit}{\encodingdefault}{\sfdefault}{m}{sl}
\SetMathAlphabet{\mathsfit}{bold}{\encodingdefault}{\sfdefault}{bx}{n}
\DeclareMathOperator*{\argmax}{arg\,max}
\setlist[itemize]{leftmargin=1.5em}
\setlist[enumerate]{leftmargin=1.5em}
\newcommand{\selffigref}[1]{(\ref{#1})}
\newcommand{\appref}[1]{Appendix \ref{#1}}
\newcommand{\propref}[1]{Proposition~\ref{#1}}
\newcommand{\lemref}[1]{Lemma~\ref{#1}}
\newcommand{\algoref}[1]{Algorithm~\ref{#1}}
\newcommand{\paren} [1] {\ensuremath{ \left( {#1} \right) }}
\newcommand{\nats}{\ensuremath{\mathbb{N}}}
\newcommand{\reals}{\ensuremath{\mathbb{R}}}
\renewcommand{\Pr}[1]{\ensuremath{\mathbb{P}\left[#1\right] }}
\newcommand{\mutualinfo}[1]{\mathbb{I}\paren{#1}}
\newcommand{\denselist}{\itemsep 0pt\topsep-6pt\partopsep-6pt}
\newcommand{\bh}{{\mathbf{h}}}
\newcommand{\bx}{{\mathbf{x}}}
\newcommand{\by}{{\mathbf{y}}}
\newcommand{\algname}{\textsf{Algname}\xspace}
\newcommand{\tempParam}{\zeta}
\newcommand{\regWeight}{\rho}
\newcommand{\regParam}{\lambda}
\newcommand{\indexRVar}{\phi}
\newcommand{\maxInfo}{\gamma}
\newcommand{\algParam}{\theta}
\newcommand{\yuxin}[1]{\ifthenelse{\boolean{showcomments}}{\textcolor{blue}{[{\bf Yuxin:} #1]}{}}}
\newcommand{\fengxue}[1]{\ifthenelse{\boolean{showcomments}}{\textcolor{brown}{ [{\bf Fengxue:} #1]}{}}}
\newcommand{\brian}[1]{\ifthenelse{\boolean{showcomments}}{ \textcolor{purple}{[{\bf Brian:} #1]}{}}}
\newcommand{\todo}[1]{\ifthenelse{\boolean{showcomments}}{\textcolor{red}{[{\bf TODO:} #1]}{}}}
\renewcommand{\algname}{\textsc{LOCo}\xspace}
\newcommand{\LOCo}{\textsc{LOCo}\xspace}
\newcommand{\UCB}{\textsc{UCB}\xspace}
\newcommand{\TS}{\textsc{TS}\xspace}
\newcommand{\SE}{\textsc{SE}\xspace}
\newcommand{\col}{\textrm{col}\xspace}
\newcommand{\rebuttal}[1]{ #1}
\newcommand{\instance}[0]{\ensuremath{\mathbf{x}}}
\newcommand{\latentrep}[0]{\ensuremath{\mathbf{z}}}
\newcommand{\GramMat}[0]{\ensuremath{\mathbf{K}}}
\newcommand{\Selected}[0]{\ensuremath{\mathbf{A}}}
\newcommand{\DataSet}[0]{\ensuremath{\mathbf{D}}}
\newcommand{\LatentRepSet}[0]{\ensuremath{\mathbf{Z}}}
\definecolor{mydarkblue}{rgb}{0,0.08,0.45}
\definecolor{darkgreen}{RGB}{34,139,34}
\definecolor{myblue}{RGB}{49,130,189}
\definecolor{myred}{RGB}{251,106,74}
\title{Learning Representation for Bayesian Optimization with Collision-free Regularization}
\author{\name Fengxue Zhang \email zhangfx@uchicago.edu \\
       \addr University of Chicago\\
       %Chicago, IL 60637, USA\\
       \AND
       \name Brian Nord \email nord@fnal.gov \\
       \addr Fermi National Accelerator Laboratory \\
	%   Chicago, IL 60637, USA\\
       \AND
       \name Yuxin Chen \email chenyuxin@uchicago.edu \\
       \addr University of Chicago\\
       }
\begin{document}

\maketitle
\begin{abstract}

Bayesian optimization has been challenged by datasets with large-scale, high-dimensional, and non-stationary characteristics, which are common in real-world scenarios. Recent works attempt to handle such input by applying neural networks ahead of the classical Gaussian process to learn a latent representation. We show that even with proper network design, such learned representation often leads to collision in the latent space: two points with significantly different observations collide in the learned latent space, leading to degraded optimization performance. To address this issue, we propose \LOCo, an efficient deep Bayesian optimization framework which employs a novel regularizer to reduce the collision in the learned latent space and encourage the mapping from the latent space to the objective value to be Lipschitz continuous. \LOCo takes in pairs of data points and penalizes those too close in the latent space compared to their target space distance. We provide a rigorous theoretical justification for \LOCo by inspecting the regret of this dynamic-embedding-based Bayesian optimization algorithm, where the neural network is iteratively retrained with the regularizer. Our empirical results demonstrate the effectiveness of \LOCo on several synthetic and real-world benchmark Bayesian optimization tasks.
\end{abstract}

%!Tex root=./main.tex

\section{Introduction}
Bayesian optimization is a classical sequential optimization method and is widely used in various fields in science and engineering, including recommender systems \citep{galuzzi2019bayesian}, medical trials \citep{sui2018stagewise}, robotic controller optimization \citep{berkenkamp2016safe}, scientific experimental design \citep{yang2019machine}, and hyper-parameter tuning \citep{snoek2012practical}, among many others. Many of these applications involve evaluating an expensive blackbox function; therefore, the number of queries should be minimized. A common way to model the unknown function is via Gaussian processes (GPs) \citep{rasmussen:williams:2006}. GP has been extensively studied under the bandit setting, as an effective surrogate model %of the unknown objective function
in a broad class of blackbox function optimization problems \citep{10.5555/3104322.3104451,Djolonga2013HighDimensionalGP}.

A key challenge for learning with GPs lies in designing and optimizing kernels used for modeling the covariance structures. Such an optimization task depends on both the prior knowledge of the input space and the dimension of the input space. %\fengxue
{For structural or high-dimensional data, it is often prohibitive to design and test a GP model. Specifically, local kernel machines are known to suffer from the curse of dimensionality \citep{bengio2005curse}, while the required number of training samples could grow exponentially with the dimensionality of the data. Therefore, %integrating 
representation learning is needed to optimize the learning process.}

\begin{figure*}[!t]
     %\vspace{-7.5mm}
     \centering
     \begin{subfigure}[b]{0.24\textwidth}
        \captionsetup{justification=centering}
         \centering
         \includegraphics[trim={0pt 10pt 0pt 0pt},
         width=\textwidth]{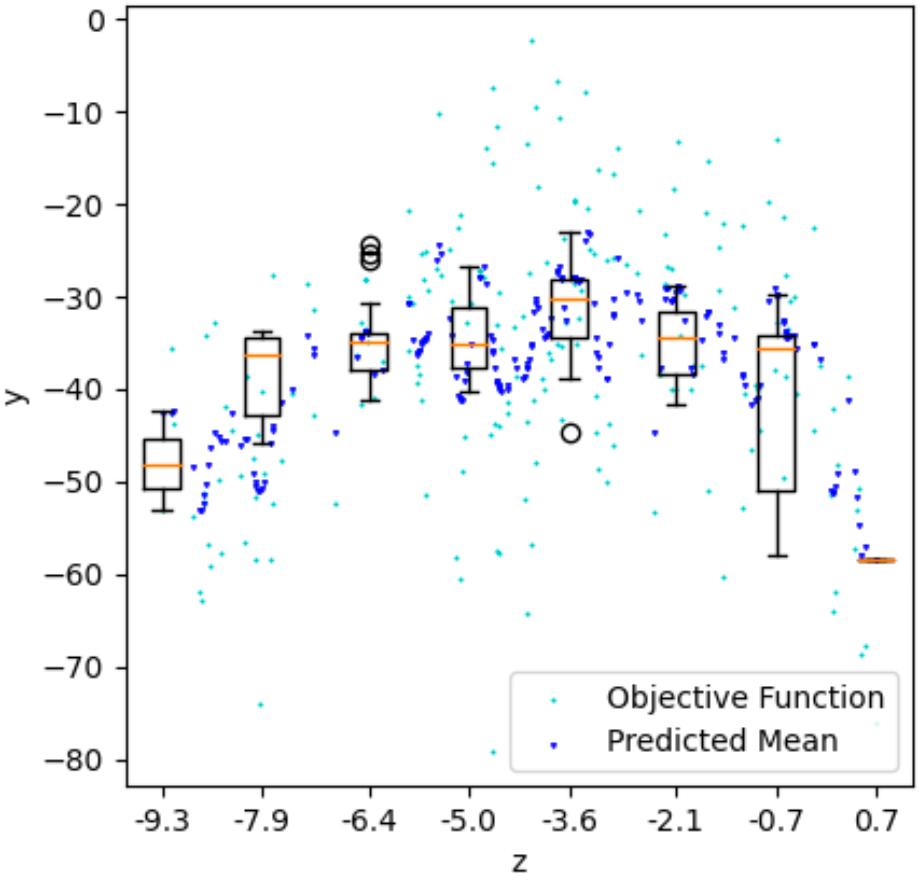}
         \caption{\scriptsize Latent space\\ w/o regularization}
         \label{fig:collision_latent}
     \end{subfigure}
      \begin{subfigure}[b]{0.24\textwidth}
        \captionsetup{justification=centering}
         \centering
         \includegraphics[trim={0pt 10pt 0pt 0pt},
         width=\textwidth]{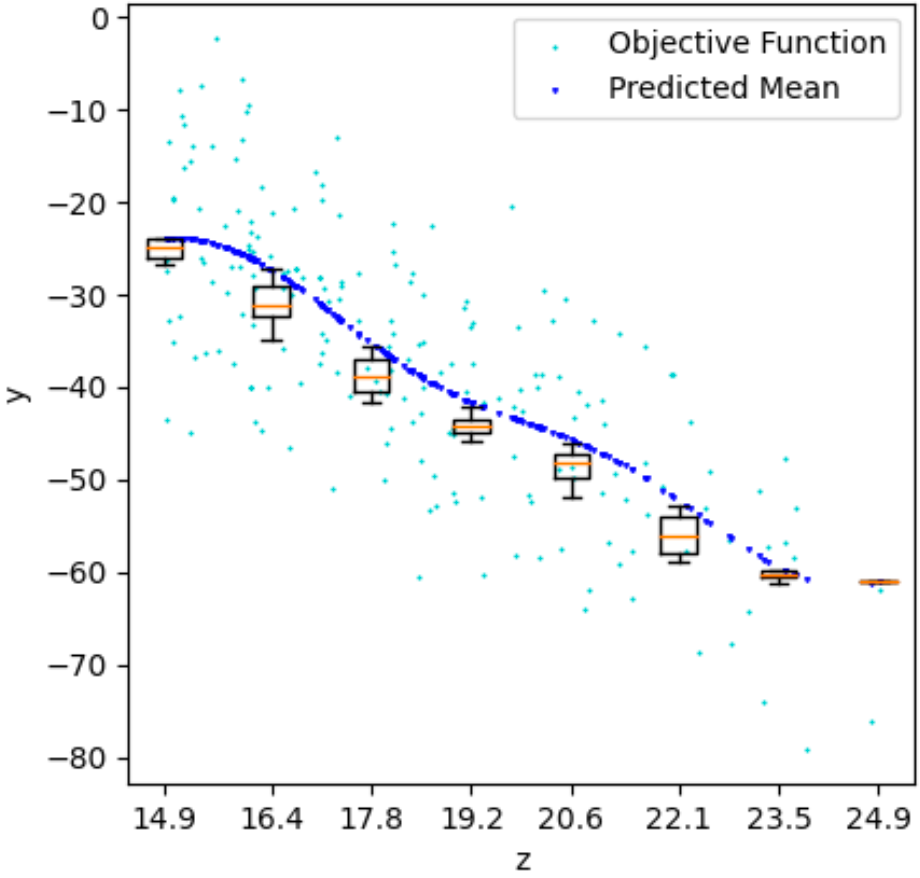}
         \caption{\scriptsize Latent space\\ w/ regularization}
         \label{fig:reg_latent}
     \end{subfigure}
     \begin{subfigure}[b]{0.24\textwidth}
        \captionsetup{justification=centering}
         \centering
         \includegraphics[trim={0pt 10pt 0pt 0pt},
         width=\textwidth]{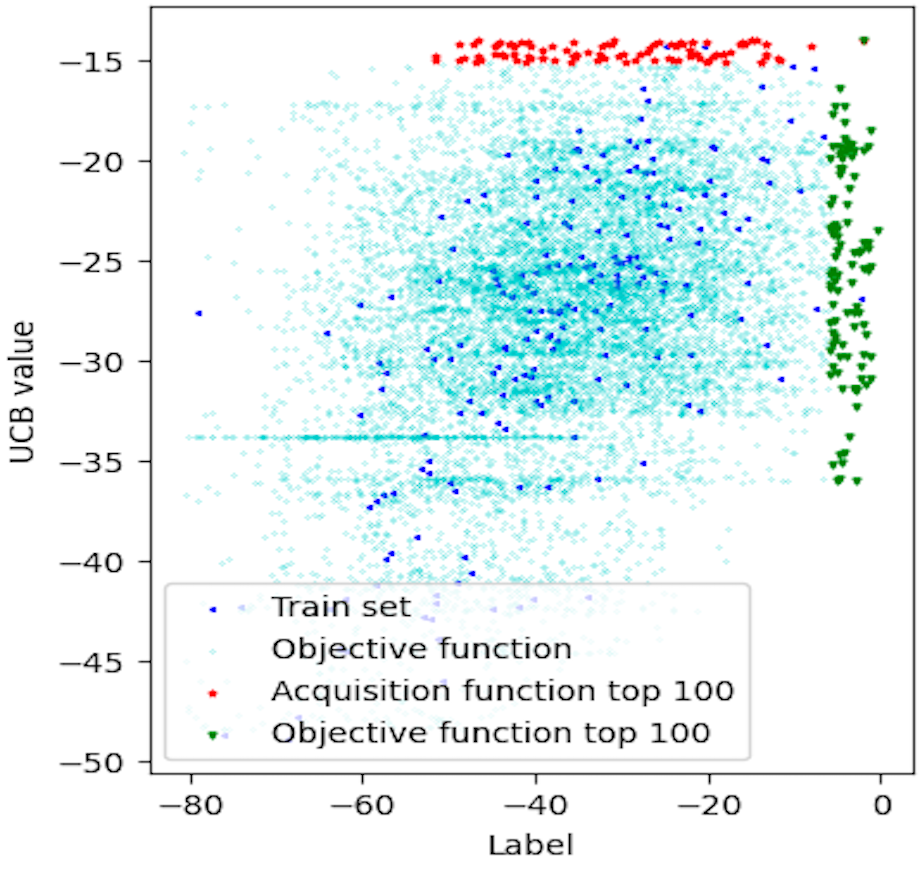}
         \caption{\scriptsize Acq. function\\ w/o regularization}
         \label{fig:collision_acq}
     \end{subfigure}
     \begin{subfigure}[b]{0.24\textwidth}
        \captionsetup{justification=centering}
         \centering
         \includegraphics[trim={0pt 10pt 0pt 0pt},
         width=\textwidth]{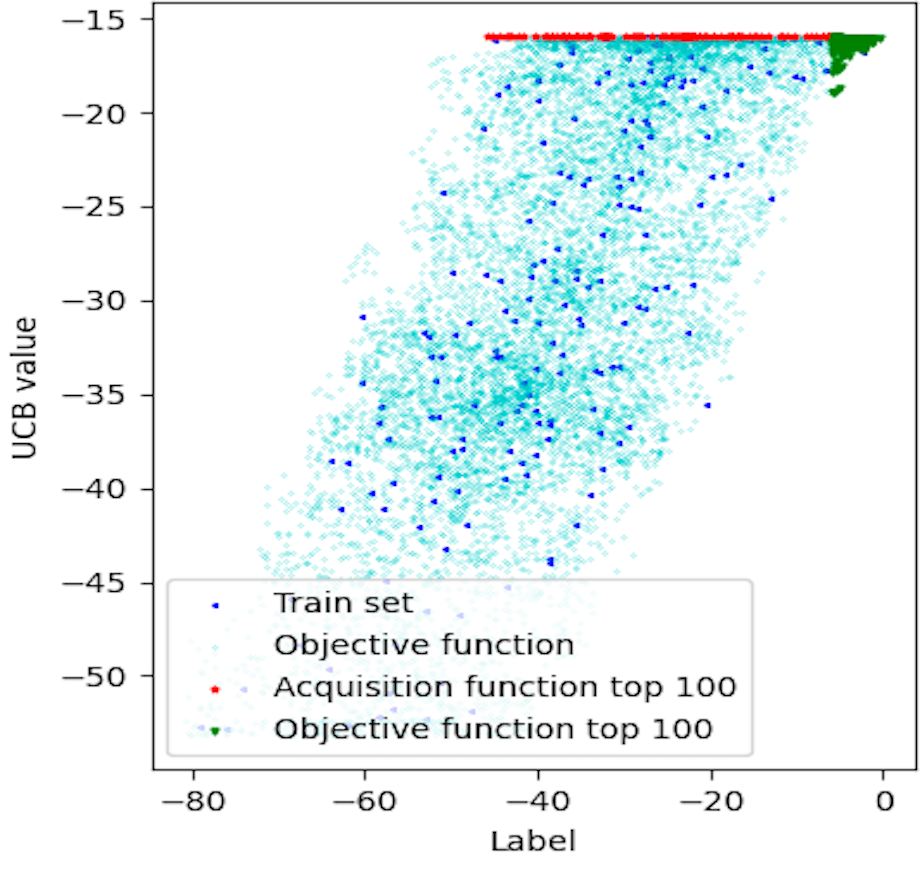}
         \caption{\scriptsize Acq. function\\ w/ regularization}
         \label{fig:reg_acq}
     \end{subfigure}

     %\\
    \caption{\selffigref{fig:collision_latent} and \selffigref{fig:reg_latent} shows the collision of a 100-point training set in the latent space of Rastrigin-2D function \citep{10018403158} which will be discussed in detail in \secref{sec:experiments}. 
    % The corresponding objective function value is shown as the cyan points and the predicted mean shown as blue points. 
    The box plot shows the distribution of the predicted mean of the training set at certain positions in the latent space. Specifically, each box at a certain tick on the z-axis corresponds to the distribution of predicted values in the interval between its tick and the right next tick on the z-axis. \selffigref{fig:collision_acq} and \selffigref{fig:reg_acq} shows the corresponding acquisition function distribution against the label for both the training set and the whole objective function.
    \selffigref{fig:reg_latent} and \selffigref{fig:reg_acq} are learned with the proposed collision mitigation algorithm. 
    Note that despite the difference in the of the acquisition function values, the training negative log likelihood (i.e. training loss) and the training set Mean Sqaure Error (MSE) for the two training results are close. %\footnote{We demonstrate more detailed training results in the appendix. %\appref{app:collision_vs_nn}}. 
    Compared to \selffigref{fig:reg_latent}, \selffigref{fig:collision_latent} bears more collision which introduces the heterogeneous noise. The additional noise affects the acquisition function as is shown in the comparison between \selffigref{fig:collision_acq} and \selffigref{fig:reg_acq}. The resulted larger green and red areas in \selffigref{fig:collision_acq} reflects the distraction caused by the collision.
    }
    \label{fig:four_graphs}
    \vspace{-4mm}
\end{figure*}

Recently, Gaussian process optimization has been investigated in the context of latent space models. For example, deep kernel learning \citep{pmlr-v51-wilson16}  learns a latent %(low-dimensional)
data representation and a scalable kernel simultaneously via an end-to-end trainable deep neural network. In general, the neural network is trained to learn a simpler latent representation with reduced dimension and has the structure information already embedded for the GP. Combining the representation learned via a neural network with GP could improve the scalability and extensibility of classical Bayesian optimization, but it also poses new challenges for the optimization task, such as dealing with the tradeoff between representation learning and function optimization \citep{Tripp2020SampleEfficientOI}.

As we later demonstrate, a critical challenge brought by representation learning in Bayesian optimization is that the latent representation is prone to \emph{collisions}: two points with significantly different observations can get too close, and therefore collide in the latent space. The collision effect in latent space models for Bayesian optimization is especially evident when information is lost during dimensionality reduction and/or when the training data is limited in size.

As illustrated in \Figref{fig:four_graphs}, when passed through the neural network, data points with drastically different observations are mapped to close positions in the latent space (see \figref{fig:collision_latent}). Such collisions could be regarded as additional heterogeneous noise introduced by the neural network. Although Bayesian optimization is known to be robust to mild noisy observations \citep{bogunovic2018adversarially}, the collision in latent space could be harmful to the optimization performance, as it is non-trivial to model the collision into the acquisition function explicitly. Also, the additional noise induced by the collision effect will further loosen the regret bound for classical Bayesian optimization algorithms \citep{10.5555/3104322.3104451}. %\fengxue{
The similar training loss and mean squared error between the examples bearing collision and the example trained on the same dataset with the proposed collision mitigation method indicates that improving learning loss could not guarantee to reduce the collision.
%}

\paragraph{Overview of main results}

% \todo{a. shorten this paragraph, b.add discussion of the model complexity, collision and performance}

To mitigate the collision effect, we propose a novel regularization scheme that can be applied as a simple plugin amendment for the latent space based Bayesian optimization models. The proposed algorithm, namely \textit{\underline{L}atent Space \underline{O}ptimization via \underline{Co}llision-free regularization}  (\algname), leverages a regularized regression loss function to optimize the latent space for Bayesian optimization periodically.

Concretely, our collision-free regularizer is encoded by a novel \emph{pairwise collision penalty} function defined jointly on the latent space and the output domain. To mitigate the risk of collision in the latent space (and consequently boost the optimization performance), \algname applies the regularizer to minimize the collisions uniformly in the latent space.

We further note that for Bayesian optimization tasks, collisions in regions close to the optimum are more likely to mislead the optimization algorithm.
%However, in Bayesian global optimization tasks, we seek to prioritize the regions close to the possible optimum, as collisions in these regions are more likely to mislead the optimization algorithm. 
Based on this insight, we propose an optimization-aware regularization scheme that assigns higher weight to the collision penalty on those pairs of points closer to the optimum region in the latent space. This algorithm, which we refer to as \textit{\underline{D}ynamically-\underline{W}eighted \algname} (DW \algname), is designed to  dynamically assess the importance of a collision during optimization. Compared with the uniform collision penalty in the latent space, the dynamic weighting mechanism has demonstrated drastic improvement over the state-of-the-art latent space based Bayesian optimization models.

We summarize our key contributions as follows:
\begin{enumerate}[label={\Roman*.},leftmargin=*]\denselist
    % \item \fengxue{We provide empirical evidence indicating that even in properly designed networks, which achieve top regression performance without overfitting the task with complex architecture, the collision effect exists.}
    \item We investigate latent space based Bayesian optimization, and expose the limitations of existing latent space optimization approaches due to the collision effect on the latent space (\Secref{sec:statement}).
    \item We propose a novel regularization scheme as a simple plugin amendment for latent-space-based Bayesian optimization models. Our regularizer penalizes collisions in the latent space and effectively reduces the collision effect. Furthermore, we propose an optimization-aware dynamic weighting mechanism for adjusting the collision penalty to improve the effectiveness of regularization for Bayesian optimization (\Secref{sec:method}).
    \item We provide theoretical analysis for the performance of Bayesian optimization on regularized latent space (\Secref{sec:analysis}).
    \item We conducted an extensive empirical study on several synthetic and real-world datasets, including a real-world case study for cosmic experimental design, and demonstrate the promising empirical performance for our algorithm (\Secref{sec:experiments}).
\end{enumerate}

\vspace{-3mm}
\section{Related Work}\label{sec:related}
\vspace{-2mm}

% \todo{(1)update pitfall paper and DKL feature collapse paper; (2) update nova's paper; (3) reduce the discussion of HDBO}

% Bayesian optimization has demonstrated promising performance in various cost-sensitive global optimization tasks \citep{7352306}. However, due to its intrinsic computational limitation in the high-dimensional regime and the well-known curse of dimensionality, its applicability has been restricted to relatively simple tasks. 
This section provides a short survey on recent work in Bayesian learning, which was designed to overcome the kernel design challenge for Gaussian process regression tasks and Bayesian optimization.

\paragraph{Different surrogate models with internal latent space}
Some alternative surrogate models have been proposed to replace classical kernel-based GP in Bayesian optimization to overcome the challenge of high-dimensional and highly-structured input in BO.
\textit{Deep Network for Global Optimization} %(\textsc{DNGO}) 
\citep{snoek2015scalable} uses a pre-trained deep neural network with a Bayesian linear regressor at the last hidden layer of the network as the surrogate model.
More generally, \textit{Deep Kernel Learning} (\textsc{DKL}) combines the power of the Gaussian process and neural network by introducing a deep neural network $g$ to learn a mapping $g:\mathcal{X} \rightarrow \mathcal{Z}$ from the input domain $\mathcal{X}$ to a latent space $\mathcal{Z}$ \citep{pmlr-v51-wilson16}. It uses the latent representation $\latentrep \in \mathcal{Z}$ as the input of the base GP. The neural network $g$ and a spectral mixture-based kernel $k$ form a scalable expressive closed-form deep covariance kernel, denoted by $k_{\text{DK}}(\instance_i,\instance_j) \rightarrow k(g(\instance_i),g(\instance_j))$. The deep kernel allows end-to-end learning and Bayesian optimization on the original input space.  

%\fengxue{
Recently, \citet{van2021feature} and \citet{ober2021promises} studied the pitfalls of \textsc{DKL} in terms of \textit{feature collapse} and overfitting, and proposed to tackle the problem with either bi-Lipschitz constraints or applying stochastic gradient Langevin dynamics \citep{welling2011bayesian}
%The feature collapse problem and overfitting problem for \textsc{DKL} has been studied by \citet{van2021feature} and \citet{ober2021promises}.
The feature collapse problem is related to the collision effect studied in this paper, where distinct points are folded into the same location in the latent space. The key difference is that feature collapse considers the collapses of data points distinct in their \textit{input representation}; in contrast, we focus on the collision effect harmful for the \textit{optimization task}, and focus on collisions where the folded points correspond to drastically different \textit{labels} as it is even desirable in optimization that distant points mapped to the same position in the latent space as long as the corresponding labels are close.
%They focus on similar problem that the latent space of \textsc{DKL} tends to be folded in regression tasks and tackle the problem with either bi-Lipschitz constraints or applying stochastic gradient Langevin dynamics \citep{welling2011bayesian}. 
%However, for the optimization task, we only care about the collisions where the folded points correspond to drastically different labels as it's even desirable in optimization that distant points mapped to the same position in the latent space as long as the corresponding labels are close.
%}

\paragraph{Representation learning and latent space optimization}
% \todo{more discussion over VAE-methods and limitation of methods}\\

Instead of reducing the dimensionality and performing optimization in an end-to-end process, other methods aim to optimize in a related latent space first and then map the solution back to the original input space. \citet{Djolonga2013HighDimensionalGP} assume that only a subset of input dimensions varies, and the kernel is smooth (i.e. with bounded RKHS norm). Under these assumptions, the underlying subspace is learned via low-rank matrix recovery. \emph{Random feature} is another solution under this setting \citep{rahimi2007random,letham2020re,binois2015warped,nayebi2019framework,wang2016bayesian}. It is known that a random representation space of sufficiently large dimension is guaranteed to contain the optima with high probability. 
% \todo{add quadratic paper and mention problem of its dependent on effective dimensions}\\ 
\cite{mutny2019efficient} consider \emph{Quadrature Fourier Features} (QFF)---as opposed to \emph{Random Fourier Feature} (RFF) in \citet{rahimi2007random}---to overcome the variance starvation problem, and proved that Thompson sampling and GP-UCB achieve no-regret with squared exponential kernel in optimization tasks. However, both RFF and QFF methods rely on a key assumption that the function to be optimized has a low \emph{effective dimension}. In contrast, as discussed in \Secref{sec:experiments} and the supplemental materials, we show that \algname performs well for challenging high-dimensional BO problems where algorithms relying on the low effective dimension assumption may fail.
%low-effective-dimension assumption \citep{mutny2019efficient}.

Another line of work on latent space optimization uses \textit{autoencoders} to learn latent representations of the inputs to improve the scalability and capability to leverage the structural information \citep{mathieu2019disentangling}, \citep{ding2020guided}, \citep{gomez2018automatic,10.5555/2832747.2832748,Tripp2020SampleEfficientOI,pmlr-v80-lu18c}. \citet{mathieu2019disentangling}, \citet{ding2020guided} focus on disentangled representation learning that breaks down, or disentangles, each feature into narrowly defined variables and encodes them as separate dimensions.\citet{Tripp2020SampleEfficientOI} iteratively train the autoencoder with a dynamic weighting scheme when performing optimization to improve the embedding.
\citet{griffiths2020constrained} and \citet{letham2020re} enforce certain properties on the representation space to improve the optimization performance. To the best of the authors' knowledge, collision of the embeddings has not been explicitly studied. \citet{binois2015warped} propose a \textit{warped kernel} to guarantee the injectivity in the random linear embedding, which is not applicable in neural network-based methods. %\fengxue{
\citet{grosnit2021high} use VAE and contrastive learning which similarly encourages latent space separation. They rely on categorical output to define the learning loss for VAE model while we don't have such constraints on the output.
%}

A common challenge in applying these techniques to generic optimization tasks lies in the assumption on the accessibility of training data: Bayesian optimization often assumes limited access to labeled data, while surrogate models built on deep neural networks often rely on abundant access to data for pretraining. Another problem lies in the training objective: During training, these surrogate models typically focus on improving the \emph{regression} performance, and do not explicitly address the artifact caused by collisions of the learned embeddings, which---as shown in \Secref{sec:collision}---could be harmful to sequential decision-making tasks.

% The challenge for combining latent space learning with Bayesian optimization lies in that a separately pre-trained neural network may not extract adequate information around the more promising region of the input space. Furthermore, the latent space could be outdated without continuous updates with the latest acquired observation. However, such a framework does not explicitly deal with collisions in the latent space, which we found to be a critical factor in the poor performance of modern latent space optimization algorithms.

% \yuxin{ending at 2.5--2.75 pages}
\section{Problem Statement}\label{sec:statement}
%\yuxin{I did a global update of the notation. Now using bold small letters for vectors, bold capital for matrices, and mathcal as domains. Please double check for consistency}\\
In this section, we introduce necessary notations and formally state the problem. We focus on the problem of sequentially optimizing a function $f: \mathcal{X}\rightarrow \reals$, where $\mathcal{X} \subseteq \reals^d$ is the input domain. At iteration $t$, we pick a point $\instance_t\in\mathcal{X}$, and observe the function value perturbed by additive noise: $y_t = f(\instance_t) + \epsilon_t$ with $\epsilon_t \sim \mathcal{N}(0, \sigma^2)$ being i.i.d. Gaussian noise. Our goal is to maximize the sum of rewards $\sum^T_{t=1}f(\instance_t)$ over $T$ iterations, or equivalently, to minimize the \emph{cumulative regret} $R_T := \sum_{t=1}^T r_t$, where $r_t := \max\limits_{\instance\in \mathcal{X}}f(\instance) - f(\instance_t)$ denotes the \textit{instantaneous regret}. %of $\instance_t$. 
%\begin{remark} %Regret is a common performance metric for BO. 
We also consider another common performance metric in BO, i.e. the simple regret $r^*_T = \max\limits_{\instance\in \mathcal{X}}f(\instance) - \max\limits_{t\leq T}f(\instance_t)$. %and cumulative regret $R_T = \sum_{t=1}^{T}{r_t}$.
%\end{remark}

\subsection{Bayesian Optimization}

Formally, we assume that the underlying function $f$ is drawn from a Gaussian process, denoted by $\mathcal{GP}(m(\instance), k(\instance, \instance'))$, where $m(\instance)$ is the mean function and $k(\instance, \instance')$ is the covariance function. At iteration $t$, given the selected points $\Selected_t=\{\instance_1, \dots ,\instance_t\}$ and the corresponding noisy evaluations $\mathbf{y}_t=[y_1,\dots, y_t]^\top$, the posterior over $f$ also takes the form of a GP, with mean $\mu_t(\instance) = k_t(\instance)^\top(\GramMat_t+\sigma^2I)^{-1}\by_t$ and covariance $k_t(\instance, \instance') = k(\instance,\instance')-k_t(\instance)^\top(\GramMat_t+\sigma^2I)^{-1}k_t(\instance')$, %and variance $\sigma^2_t(\instance) = k_t(\instance,\instance)$,
% \begin{align*}
%     \mu_T(\instance) &= k_T(\instance)^T(\GramMat_T+\sigma^2I)^{-1}\by_T \\
%     k_T(\instance, \instance') &= k(\instance,\instance')-k_T(\instance)^T(\GramMat_T+\sigma^2I)^{-1}k_T(\instance') \\
%     \sigma^2_T(\instance) &= k_T(\instance,\instance),
% \end{align*}
where $k_t(\instance) = [k(\instance_1, \instance),\dots, k(\instance_t, \instance)]^\top$ and $\GramMat_t := [k(\instance, \instance')]_{\instance,\instance' \in \Selected_t}$ is the positive definite kernel matrix \citep{10.5555/1162254}. After obtaining the posterior, one can compute the acquisition function $\alpha: \mathcal{X} \rightarrow \reals$, which is used to select the next point to be evaluated. Various acquisition functions have been proposed in the literature, including popular choices such as Upper Confidence Bound (\UCB) \citep{10.5555/3104322.3104451} and Thompson sampling (\TS) \citep{10.1093/biomet/25.3-4.285}. 
% UCB uses the upper confidence bound $\alpha_{\UCB}(\instance) = \mu_t(\instance) + \beta^{1/2}\sigma_t(\instance)$ with $\beta(\instance)$ being the confidence coeffecient, and enjoys rigorous sublinear regret bound.  \TS usually outperforms UCB in practice and has been shown to enjoy a similar regret bound \cite{agrawal2012analysis}.  
% It samples a function $\tilde{f_t}$ from the GP posterior $\tilde{f_t}\sim\mathcal{GP}(\mu_t(\instance), k_t(\instance, \instance'))$ and then uses the sample as an acquisition function: 
% $\alpha_{\TS}(\instance) = \tilde{f_t}(\instance)$.

\subsection{Latent Space Optimization}
Recently, \textit{Latent Space Optimization} (LSO) has been proposed to solve Bayesian optimization problems on complex input domains \citep{gomez2018automatic,10.5555/2832747.2832748,Tripp2020SampleEfficientOI,pmlr-v80-lu18c}. LSO learns a latent space mapping $g: \mathcal{X}\rightarrow \mathcal{Z}$  to convert the input space $\mathcal{X}$ to the latent space $\mathcal{Z}$. Then, it constructs an objective mapping $h: \mathcal{Z} \rightarrow \reals$ such that $f(\instance)\approx h(g(\instance)),\ \forall \latentrep\in \mathcal{Z}$. In this paper, we model the latent space mapping $g$ as a neural network;
%We mainly consider when the latent space mapping $g$ is a neural network, 
the neural network $g$ and the base kernel $k$ together are regarded as a \emph{deep kernel}, denote by $k_{\text{nn}}(\instance, \instance') = k(g(\instance), g(\instance'))$ \citep{pmlr-v51-wilson16}. In this context, the actual input space for BO is the latent space $\mathcal{Z}$ and the objective function is $h$. With the acquisition function $\alpha_{\text{nn}}(\instance) := \alpha(g(\instance))$, we do not compute an inverse mapping $g^{-1}$ as opposed to the aforementioned autoencoder-based LSO algorithms (e.g. \citet{Tripp2020SampleEfficientOI}), since BO directly select $\instance_t = \argmax\limits_{\instance\in \mathcal{X}} \alpha_{\text{nn}}(\instance)\ \forall t\leq T$. % and evaluate $f$. 
% In the meantime, BO can leverage the latent space mapping $g$, usually represented by a neural network, to effectively learn and optimize the target function $h$ on a lower-dimensional input space. 
%Without loss of generality, 
In our analysis, we use squared exponential kernel, i.e. \rebuttal{$k_{\SE}(\instance,\instance')=\sigma^2_{\SE}\exp(-\frac{(\instance-\instance')^2}{2l})$.}

% \subsection{The Collision Existence}
% \todo{add the study of necessity of collision (model complexity & collision & performance) -> leave details in Exp section.}
% \fengxue{}\yuxin{merging to 3.4}

\subsection{The Collision Effect of LSO} \label{sec:collision}
When the mapping $g: \mathcal{X} \rightarrow \mathcal{Z}$ is represented by a neural network, it may cause undesirable \emph{collisions} between different input points in the latent space $\mathcal{Z}$. Under the noise-free setting, we say there exists a \emph{collision} in $\mathcal{Z}$, if $\exists \instance_i, \instance_j \in \mathcal{X}$, such that when $g(\instance_i) = g(\instance_j)$, $|f(\instance_i) - f(\instance_j)| > 0$. Such collision could be regarded as additional (unknown) noise on the observations introduced by the neural network $g$. %For a noisy observation $y = f(\instance) + \epsilon$, we define a collision as follows: for $\rho > 0$, $\exists \instance_i, \instance_j \in D$, $|g(\instance_i) - g(\instance_j)| < \rho |y_i - y_j|$. 
Given a representation function $g$, noisy observations $y = f(\instance) + \epsilon$, we say that there exists a collision, if for $\lambda > 0$, there exist $\instance_i, \instance_j \in \mathcal{X}$, such that $|g(\instance_i) - g(\instance_j)| < \lambda |y_i - y_j|$. 

When the distance between a pair of points $(\instance_i, \instance_j)$ in the latent space is too close compared to their difference in the output space, the different output values $y_i, y_j$ for the collided points in the latent space could be interpreted as the effect of additional observation noise for $g(\instance_i)$ (or $g(\instance_j)$). In general, collisions could degrade the performance of LSO. Since the collision effect is \emph{a priori} unknown, it is often challenging to be dealt with in LSO, even if we regard it as additional observation noise and increase the (default) noise variance in the Gaussian process. Thus, it is necessary to mitigate the collision effect by directly restraining it in the representation learning phase. One potential method to avoid collision could be tuning the design of neural networks. However, we empirically show that increasing the network complexity often does \textit{not} help to reduce the collision. \rebuttal{The study is posed in \appref{app:collision_vs_nn}}.

\rebuttal{We consider a low-noise setting where the collision can play a more significant role in degrading the optimization performance. As is shown in \Figref{fig:four_graphs} that the collision could result in larger difficulty in the optimization task. And when a collision exists, it is hard to distinguish it from the observation. Therefore we focused on treating the collision when defining the penalty instead of dealing with the stochasticity.}

% \yuxin{targeting at ending at page 4}
\section{Latent Space Optimization via Collision-free Regularization}\label{sec:method}
We now introduce \algname %(i.e. {\underline{L}}atent Space {\underline{O}}ptimization via {\underline{Co}}llision-free regularization ) %as our
%Latent Space Optimization via Collision-free Regularization (\algname),
%novel latent space optimization framework designed to
as a novel algorithmic framework to mitigate the collision effect.

\subsection{Overview of the \algname Algorithm}
The major challenge in restraining collisions in the latent space is that---unlike the formulation of the classical regression loss---we cannot quantify it based on a single training example. We can, however, quantify collisions by grouping pairs of data points and inspecting their corresponding observations.

\begin{figure}[!ht]
%\vspace{-5mm}
  \begin{center}
    \includegraphics[width=.65\textwidth]{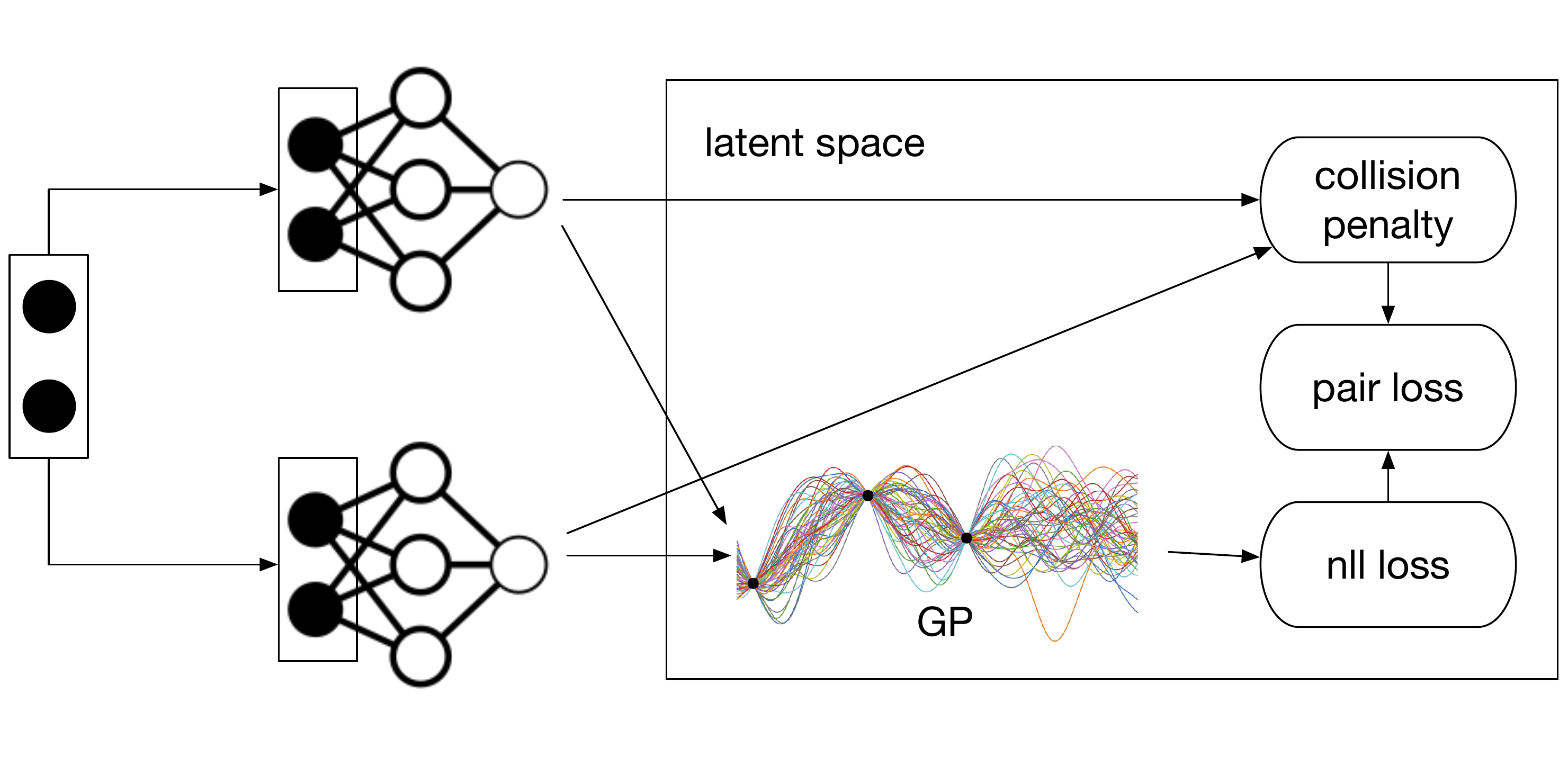}
\end{center}
\caption{Schematic of \algname} \label{fig:coflo-sketch}
\vspace{-2mm}
\end{figure}

We define the \emph{collision penalty} based on pairs of inputs and further introduce a pair loss function to characterize the collision effect.
Based on this pair loss, we propose a novel regularized latent space optimization algorithm\footnote{Note that we have introduced several hyper-parameters in the algorithm design; we will defer our discussion on the choice of these parameters to \Secref{sec:experiments}.}, as summarized in \algoref{alg:main}. The proposed algorithm concurrently feeds the pair-wise input into the same network and calculates the pair loss function. We demonstrate this process in \Figref{fig:coflo-sketch}.

Given a set of labeled data points, we can train the neural network to create an initial latent space representation similar to DKL \citep{pmlr-v51-wilson16}\footnote{To obtain an initial embedding in the latent space, the process does not require the labels to be exact and allows the labels to be collected from a related task of cheaper cost.}. Once provided with the initial representation, we can then refine the latent space by running \algname and periodically update the latent space (i.e. updating the learned representation after collecting a batch of data points) to mitigate the collision effect as we gather more labels.

% \begin{algorithm}
%   \caption{Collision-Regularized Latent Space Optimization (\algname)}\label{alg:main}
%   % \begin{multicols}{2}
%   \begin{algorithmic}[1]
%     \State {\bf Input}: Penalty parameter $\regParam$ \rebuttal{(cf. \Eqref{eq:penalty})}, regularization weight $\regWeight$ \rebuttal{(cf. \Eqref{eq:pairloss})}, retrain interval $\tilde{T}$, importance weight parameter $\tempParam$ \rebuttal{(cf. \Eqref{eq:weighted})}, neural network $M_0$, base kernel $\GramMat_0$, prior mean $\mu_0$, total time steps $T$;
%     \For{$t = 1\ to\ T$}
%         \State $\instance_t \leftarrow \argmax\limits_{\instance\in D}{\alpha(M_t(x))}$ \Comment{\textit{maximize acquisition function}}
%         \State $y_t \leftarrow$ evaluation on $\instance_t$ \Comment{\textit{update observation}}
%         \If{$t \equiv 0 \pmod{\tilde{T}}$}
%             \State $M_{t+1}, \GramMat_{t+1} \leftarrow$ retrain $M_t$ and $\GramMat_t$ with the pair loss function $L_{\regWeight,\regParam,\tempParam}(M_t, \GramMat_t, D_t)$  as defined in \eqref{eq:pairloss} \Comment{\textit{periodical retrain}}
%         \EndIf
%     \EndFor
%     \State {\bf Output}: $\max\limits_{t}{y_t}$
%   \end{algorithmic}
%   % \end{multicols}
% \end{algorithm}

\begin{algorithm}
  \caption{\textbf{\underline{L}}atent Space \textbf{\underline{O}}ptimization via \textbf{\underline{Co}}llision-free Regularization  (\algname)}\label{alg:main}
  % \begin{multicols}{2}
  \begin{algorithmic}[1]
    \State {\bf Input}: Penalty parameter $\regParam$ (cf. \Eqref{eq:penalty}), regularization weight $\regWeight$ (cf. \Eqref{eq:pairloss}), 
    % retrain interval $\tilde{T}$, 
    importance weight parameter $\tempParam$ (cf. \Eqref{eq:weighted}), neural network $g$, parameters $\algParam_t = (\algParam_{h,t}, \algParam_{g,t})$, total time steps $T$;
    \For{$t = 1\ to\ T$}
        \State $\instance_t \leftarrow \argmax\limits_{\instance\in D}{\alpha(g(\instance, \algParam_{g, t}))}$ \Comment{\textit{acquire}}
        \State $y_t \leftarrow$ evaluation on $\instance_t$ \Comment{\textit{update observation}}
        % \If{$t \equiv 0 \pmod{\tilde{T}}$}  \Comment{\textit{periodically retrain / optimize the neural network parameters}}
            \State $\algParam_{t+1} \leftarrow$ retrain $\algParam_t$ with the pair loss function 
            % $L_{\regWeight,\regParam,\tempParam}(g, h, \algParam_t, D_t)$  
            $L_{\regWeight,\regParam,\tempParam, g}(\algParam_t, D_t)$ as in \Eqref{eq:pairloss}
        % \EndIf
    \EndFor
    \State {\bf Output}: $\max\limits_{t}{y_t}$
  \end{algorithmic}
  % \end{multicols}
\end{algorithm}

\vspace{-3mm}

\subsection{Collision Penalty}
% \todo{change notation to align with later analysis}

This subsection aims to quantify the collision effect based on the definition proposed in \Secref{sec:collision}. As illustrated in \Figref{fig:coflo-sketch}, we feed pairs of data points into the neural network and obtain their latent space representations. Apart from maximizing the GP's likelihood, we concurrently calculate the amount of collision on each pair and incur a penalty when the value is positive.
For $\instance_i, \instance_j \in \mathcal{X}$, $y_i = f(\instance_i) + \epsilon$, \rebuttal{$y_j = f(\instance_j) + \epsilon$} are the corresponding observations, and $\latentrep_i = g(\instance_i)$, $\latentrep_j = g(\instance_j)$ are the corresponding latent space representations. We define the \emph{collision penalty} as
\begin{align}
    p_{ij} = \max(\regParam|y_i - y_j| - |\latentrep_i - \latentrep_j|, 0)\label{eq:penalty}
\end{align}
where $\regParam$ is a penalty parameter that controls the smoothness of the target function $h: \mathcal{Z} \rightarrow \reals$. %One way to pick $\regParam$ is to estimate $\regParam$ on the original $\mathcal{X} \rightarrow \mathbb{R}$ so that the algorithm won't over-reduce the collision.
\rebuttal{As a rule of thumb, one can estimate $\regParam$ by sampling from the original data distribution $\mathbb{P}(X,Y)$, $(X,Y)\in \mathcal{X}\times \reals$, and choose the $\lambda$ to be the maximum value such that $\sum_{i,j}\max(\regParam|y_i - y_j| - |\instance_i - \instance_j|, 0) = 0$ (i.e. to provide an upper bound for $\lambda$ by keeping the total collision in the input domain to be zero).}
%\yuxin{Set $\regParam$ according to the original $X \rightarrow Y$; to keep the same amount of collision}

\subsection{Importance-Weighted Collision-Free Regularizer}\label{sec:dynamicweight}
%\subsection{The Dynamically-Weighted \algname Algorithm}
\rebuttal{Note that it is challenging to universally reduce the collisions by minimizing the collision penalty and the GP's regression loss}---this is particularly the case with a limited amount of training data. Fortunately, for optimization tasks, it is often unnecessary to learn fine-grained representation for suboptimal regions. {Therefore, we can dedicate more training resources to improve the learned latent space pertaining to the potentially near-optimal regions}. Following this insight, we propose to use a weighted collision penalty function, which uses the objective values for each pair as an importance weight in each iteration.
Formally, for any pair $((\instance_j, \latentrep_j, y_j),(\instance_i, \latentrep_i, y_i))$ in a batch of observation pairs $D_t = \{((\instance_m, \latentrep_m, y_m),(\instance_n, \latentrep_n, y_n))\}_{m,n}$ where $\instance_n,\instance_m\in{\Selected_t}$ and $y_n,y_m\in\by_t$, we define the \emph{importance-weighted penalty function} as
\begin{align}
    \tilde{p}_{ij} = p_{ij} w_{ij} \quad \text{with} \quad  w_{ij}= \frac{e^{\tempParam(y_i + y_j)}} {\sum\limits_{(m,n) \in D_t}{e^{\tempParam (y_m + y_n)}}}.\label{eq:weighted}
\end{align}
Here the importance weight $\tempParam$ is used to control the aggressiveness of the weighting strategy. %When $\tempParam=0$, \privatenote{\Eqref{eq:weighted} corresponds to uniform weights.} %the weighting strategy turns out to be a uniform weights.

Combining the kernel learning objective---negative log likelihood and the collision penalty for GP, we define the \emph{pair loss} function $L_{\regWeight,\regParam,\tempParam, g}$ as
% \begin{align}\label{eq:pairloss}
% & L_{\regWeight,\regParam,\tempParam}(g, h, \algParam_t, D_t) =  \nonumber \\
% & \frac{1}{||D_t||^2}\sum\limits_{i\in{D_t}, j\in{D_t}}
% (h(g(\instance_i,\algParam_{g,t}),\algParam_{h,t} ) - y_i)^2
% + (h(g(\instance_j,\algParam_{g,t}),\algParam_{h,t} ) - y_j)^2
% + \regWeight{\tilde{p}_{ij}}.
% \end{align}
\begin{align}\label{eq:pairloss}
%&
L_{\regWeight,\regParam,\tempParam, g}(\algParam_t, D_t)
%\nonumber\\  
=
%~&
-\log(P(\mathbf{y}_t|\Selected_t,\algParam_t)) + \frac{\regWeight}{||D_t||^2}\sum\limits_{i\in{D_t}, j\in{D_t}}{\tilde{p}_{ij}}
% \nonumber \\
% & \frac{1}{||D_t||^2}\sum\limits_{i\in{D_t}, j\in{D_t}}
% (h(g(\instance_i,\algParam_{g,t}),\algParam_{h,t} ) - y_i)^2
% + (h(g(\instance_j,\algParam_{g,t}),\algParam_{h,t} ) - y_j)^2
% + \regWeight{\tilde{p}_{ij}}.
\end{align}
%
% \todo{transfer to NLL instead MSE which is not typical for kernel learning}
where $-\log(P(\mathbf{y}_t|\Selected_t,\algParam_t)) = -\frac{1}{2}\mathbf{y}_t^\top(\GramMat_t+\sigma^2I)^{-1}\by_t -\frac{1}{2}|(\GramMat_t+\sigma^2I)| -\frac{t}{2}\log(2\pi)$ is the learning objective for the GP \citep{10.5555/1162254}. $\regWeight$ denotes the regularization weight; as we demonstrate in \Secref{sec:experiments}, \rebuttal{we initialize the regularization weight} $\regWeight$ to keep the penalty at the same order of magnitude as the negative log likelihood. Another option for optimizing \eqref{eq:pairloss} is to minimize the regression loss and the collision penalty alternatively. We observe in our empirical study that both training processes could lead to reasonable convergence behavior of the \algname training loss.

% \fengxue{ends in 5.5}\\

%\subsection{Theoretical Discussion} \label{sec:analy sis}
\section{\rebuttal{Theoretical Insight}} \label{sec:analysis}
% \todo{integrate this part into the method or discussion section}
% \todo{change tone to justification: mostly GP-UCB work and lip-cont optimization}

% \todo{fix the retrain domain change problem in the proof. (fix retrian time, or bound the latent space)}

% \todo{discuss the meaning of the analysis} 

This subsection discusses the theoretical insight underlying the collision-free regularizer, by inspecting the effect of regularization on the regret bound of \algname where the constantly trained neural network feeds a dynamic embedding to the Gaussian process. 

While the key idea for bounding the regret of \UCB-based GP bandit optimization algorithms follows the analysis of \citet{10.5555/3104322.3104451}, two unique challenges are posed in the analysis of \algname. Firstly, unlike previous work in \citet{10.5555/3104322.3104451}, the neural network is constantly retrained along with the new observations. Thus, the input space for the downstream Gaussian process could be highly variant. 

\rebuttal{For the discussion below, we consider a stationary and monotonic kernel, and assume that retraining the neural network $g$ does not decrease the distance between data points in the latent space. It is worth noting that, although not strictly enforced, such monotonicity behavior is naturally encouraged by our proposed regularization, which only penalizes the pair of too-close data in the latent space.}
%We need to make a mild assumption that the distance between points on the learned latent space is monotonically increasing on the training process to allow regret-bound analysis. 
%Intuitively, such monotonicity is natuarally encouraged by the proposed regularization, which only penalizes the pair of too-close data in the latent space.
%
% Secondly, combining such assumption and stationary kernels like the commonly used squared-exponential kernel, we obtained a growing $\sigma_{t_1}$ which could be harmful for the regret bound of UCB methods. To resolve the concern, we provide the rigorous condition when the benefits brought by mitigating the collision with the proposed regularizer could outweigh the harm in increasing the distance.
%
%Secondly, combining the assumption and the property of stationary kernels, it is still challenging to analyze the collision quantitatively. The internal complexity of neural network training makes it challenging to offer rigorous analysis. Thus, we analyze the reduction in the mutual information in the regret bound, given the collision is solved during retraining. Together with the previous result, we offer the regret analysis to justify the proposed collision-free regularizer.
%
\rebuttal{Under the above assumption, the internal complexity of neural network training still makes it challenging to bound the regret w.r.t the dynamics of the neural network. Thus, we investigate the dynamics of the mutual information term in the regret bound, and justify the proposed collision-free regularizer by showing that penalizing the collisions tends to reduces the upper bound on the regret.}

%given the collision is solved during retraining. Together with the previous result, we offer the regret analysis to justify the proposed collision-free regularizer.

We first consider a discrete decision set and then leverage the desired Lipschitz continuity on the regularized space to extend our results to the continuous setting \rebuttal{(cf full proofs in \appref{app:proofs})}. 

\begin{proposition}\label{prp:discrete_regret}
    Let $\mathcal{Z}$ be a finite discrete set. 
    Let $\delta \in (0,1)$, and define $\beta_t =2\log(|\mathcal{Z}|t^2/6\delta)$. Suppose that the objective function $h:\mathcal{Z}\times{\mathcal{\algParam}} \xrightarrow[]{}\mathcal{R}$ defined on $\mathcal{Z}$ and parameterized by $\theta$ is a sample from GP. %\privatenote{double check: do we need $\theta$?}.
    \rebuttal{Furthermore, consider a stationary and monotonic kernel, and assume that retraining the neural network $g$ does not decrease the distance between data points in the latent space.} 
    Running GP-UCB with $\beta_t$ for a sample $h$ of a GP with mean function zero and stationary covariance function $k(\instance,\instance')$, we obtain a regret bound 
    of $\mathcal{O}^*(\sqrt{\log(|\mathcal{Z}|)T(\maxInfo_{T}-\mutualinfo{h(\latentrep_T, \algParam_{h,0}); \indexRVar_T}})$ with high probability. 
    
    More specifically, with $C_1 = 8/\log(1+\sigma^{-2})$, we have 
    $$\Pr{R_T\leq \sqrt{C_1T\beta_T(\maxInfo_{T}-\mutualinfo{h(\latentrep_T, \algParam_{h,0}); \indexRVar_T}})}\geq 1-\delta.$$
    Here $\maxInfo_T$ is the maximum information gain after T iterations, and $\indexRVar_T$ as the identification of the collided data points on $\mathcal{Z}$. $\maxInfo_T$ is defined as $\maxInfo_T \coloneqq \max\limits_{\Selected\subset \mathcal{Z}, |\Selected|=T}\mutualinfo{y_\Selected, \indexRVar_T; h(\Selected, \algParam_T)}$.
\end{proposition}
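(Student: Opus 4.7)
The plan is to adapt the standard \UCB regret analysis of \citet{10.5555/3104322.3104451} to the two features new to \algname: (i) the embedding $g$ is retrained across rounds, so the Gaussian process lives on a time-varying latent representation, and (ii) the collision indicator $\indexRVar_T$ encodes information that is effectively wasted when pairs of inputs are mapped to coincident latent points. The first step is the familiar concentration step. Conditional on any realization of $(\algParam_{g,t})_{t\le T}$, the sample $h(\cdot,\algParam_{h,0})$ is drawn from the stated GP, so a union bound over the finite $\mathcal{Z}$ and over $t$ combined with the choice $\beta_t=2\log(|\mathcal{Z}|t^2/6\delta)$ gives, with probability at least $1-\delta$,
\begin{align*}
|h(\latentrep,\algParam_{h,0}) - \mu_{t-1}(\latentrep)| \;\le\; \beta_t^{1/2}\sigma_{t-1}(\latentrep) \quad \text{uniformly in } t,\latentrep.
\end{align*}
The \UCB selection rule then yields $r_t \le 2\beta_t^{1/2}\sigma_{t-1}(\latentrep_t)$, and Cauchy--Schwarz gives $R_T^2 \le 4T\beta_T \sum_{t=1}^T \sigma_{t-1}^2(\latentrep_t)$.

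The second step is to relate $\sum_t \sigma_{t-1}^2(\latentrep_t)$ to an information-gain quantity despite the dynamic embedding. Here I would invoke the monotonicity assumption on $g$: because retraining never decreases pairwise distances in $\mathcal{Z}$ and the base kernel $k_{\SE}$ is strictly decreasing in distance, off-diagonal kernel entries can only shrink over time. Consequently, the Gram matrix at round $t$ is PSD-dominated by the Gram matrix obtained by applying the \emph{initial} embedding to the same sequence of queries, so the posterior variance under the retrained network is no larger than that under the initial network conditioned on the same queries. This lets me upper bound $\sum_t \sigma_{t-1}^2(\latentrep_t)$ by $\tfrac{2}{\log(1+\sigma^{-2})}\mutualinfo{y_{\Selected_T};h(\Selected_T,\algParam_{h,0})}$ via the telescoping variance-to-information identity of \citet{10.5555/3104322.3104451}.

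The third step is the collision-aware decomposition that introduces $\indexRVar_T$. By the chain rule for mutual information,
\begin{align*}
\mutualinfo{y_\Selected,\indexRVar_T;h(\Selected,\algParam_{h,0})} \;=\; \mutualinfo{\indexRVar_T;h(\Selected,\algParam_{h,0})} + \mutualinfo{y_\Selected;h(\Selected,\algParam_{h,0})\,\vert\,\indexRVar_T},
\end{align*}
and since the first term on the right is independent of $\Selected$, maximizing over $\Selected\subset\mathcal{Z}$ with $|\Selected|=T$ gives $\mutualinfo{y_\Selected;h}\le \maxInfo_T - \mutualinfo{h(\latentrep_T,\algParam_{h,0});\indexRVar_T}$. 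Plugging this into the bound from the previous step and taking the square root produces the claimed inequality with $C_1=8/\log(1+\sigma^{-2})$; the $\mathcal{O}^\ast$ form in the proposition follows by absorbing the $\beta_T$ factor, which hides the $\log|\mathcal{Z}|$ dependence.

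The hard step, I expect, is the second one: turning the geometric assumption ``distances do not decrease'' into a clean comparison of posterior variances. The subtle point is that $\sigma_{t-1}(\latentrep_t)$ depends on \emph{all} historical queries, but those queries were produced by \emph{earlier} incarnations of $g$, so the current Gram matrix is not simply a monotone transform of a fixed one. I would therefore introduce an auxiliary ``frozen'' GP whose Gram matrix is built by applying the \emph{current} $g$ to all past inputs, and compare its posterior variance to the true one round-by-round. Carrying this comparison out tightly, without leaking extra $T$ or $\log|\mathcal{Z}|$ factors, is where the real technical work lies and is also what makes the monotonicity assumption (and thus the collision-free regularizer, which naturally encourages it) the key ingredient of the argument.
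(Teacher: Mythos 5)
Your overall architecture matches the paper's: concentration via a union bound over the finite $\mathcal{Z}$ and over $t$, the standard $r_t\le 2\beta_t^{1/2}\sigma_{t-1}$ step, a variance-to-information identity, a chain-rule decomposition that introduces $\indexRVar_T$, and a final Cauchy--Schwarz. Your step 3 is essentially the paper's Lemma~\ref{lem: mutual_information_2}, and your identification of the variance-comparison step as the crux is accurate --- the paper handles it in a single line inside Lemma~\ref{lem: discrete_ucb_2}.

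However, your step 2 gets the direction of that comparison backwards, and this is a genuine gap. Under the stated assumption, retraining does \emph{not decrease} pairwise distances, and the kernel is stationary and monotonically decreasing in distance; hence later embeddings have \emph{smaller} off-diagonal kernel entries, the observations become \emph{less} informative about any query point, and the posterior variance gets \emph{larger} over time --- not smaller. The paper therefore upper-bounds $\sigma_{t-1}(g(\instance_t,\algParam_{g,t-1}))$ by $\sigma_{t-1}(g(\instance_t,\algParam_{g,T}))$, i.e.\ it compares against the \emph{final} embedding, and then expresses the information gain (Lemma~\ref{lem: mutual_information_1}) in terms of $\algParam_{g,T}$ before decomposing it via $\indexRVar_T$. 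Your claim that ``the posterior variance under the retrained network is no larger than that under the initial network'' would require distances to be non-\emph{increasing}, which contradicts the hypothesis; as written, your bound on $\sum_t\sigma_{t-1}^2(\latentrep_t)$ by the information gain under the initial embedding does not follow. Relatedly, your chain-rule step stays entirely within the fixed parameterization $\algParam_{h,0}$, whereas the paper's Lemma~\ref{lem: mutual_information_2} is precisely the bridge between the information gain of the \emph{regularized} process (under $\algParam_{h,T}$, assumed collision-free given $\indexRVar_T$) and that of the unregularized one --- without that bridge the term $\mutualinfo{h(\latentrep_T,\algParam_{h,0});\indexRVar_T}$ you subtract is not connected to the quantity your step 2 actually produces. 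Fixing the direction (compare to $\algParam_{g,T}$, not $\algParam_{g,0}$) and routing the information gain through the final embedding restores the paper's argument.
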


The collision regularization reduced the maximum mutual information by a specific term dependent on the distribution of the noise caused by the collision of data points. The distribution is dynamic and determined by the complex learning process of the neural network. In the following, we show that the mutual information is bounded within a given interval:%Thus, instead of giving a rigorous formula of it, we offer an estimation of its range.

Assume $\indexRVar_t$ is a random variable that identify $\latentrep_t \in \mathcal{Z}$, 
$y_t \in \mathcal{Y}$ from its collided points, and the variance of the collision is $\sigma_{\col}^2$, then we have
$$ 0\leq \mutualinfo{h(\latentrep_T, \algParam_{h,0}); \indexRVar_T} \leq 1/2\log|2\pi e \sigma^2_{\col}I|$$

This means that if $\indexRVar_t$ is a random variable sampled from a Gaussian distribution defined on $h$, then $\mutualinfo{h(\latentrep_T, \algParam_{h,0}); \indexRVar_T}$ is maximized.

The regularization also constrains the function $h$ to be Lipschitz-continuous with a Lipschitz constant, enabling a slightly narrower regret bound.

\begin{proposition}\label{prp:continuous_regret}
    Let $\mathcal{Z} \subset [0, r]^d$  be compact and convex, $d\in N, r>0$ and $\lambda\geq 0$. 
    Suppose that the objective function $h:\mathcal{Z}\times{\mathcal{\algParam}} \xrightarrow[]{}\mathcal{R}$ defined on $\mathcal{Z}$ and parameterized by $\theta$ is a sample from GP and is Lipschitz continuous with Lipschitz constant $\lambda$.
    Let $\delta \in (0,1)$, and define $\beta_t =2\log(\pi^2t^2/6\delta)+2d\log(\lambda rdt^2)$. \rebuttal{Furthermore, consider a stationary and monotonic kernel, and assume that retraining the neural network $g$ does not decrease the distance between data points in the latent space.}
    Running GP-UCB with $\beta_t$ for a sample $h$ of a GP with mean function zero and stationary covariance function $k(\instance, \instance')$, we obtain a regret bound 
    of $O^*(\sqrt{dT(\maxInfo_{T}-\mutualinfo{h(\latentrep_T, \algParam_{h,0}); \indexRVar_T}})$ with high probability. 
    
    More specifically, with $C_1 = 8/\log(1+\sigma^{-2})$, we have 
    $$\Pr{R_T\leq \sqrt{C_1T\beta_T(\maxInfo_{T}-\mutualinfo{h(\latentrep_T, \algParam_{h,0}); \indexRVar_T})}+2}\geq 1-\delta.$$
    Here $\maxInfo_T$ is the maximum information gain after T iterations, and $\indexRVar_T$ as the identification of the collided data points on $\mathcal{Z}$. $\maxInfo_T$ is defined as $\maxInfo_T \coloneqq \max\limits_{\Selected\subset \mathcal{Z}, |\Selected|=T}\mutualinfo{y_\Selected, \indexRVar_T; h(\Selected, \algParam_T)}$.
\end{proposition}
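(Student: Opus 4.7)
The plan is to lift \propref{prp:discrete_regret} to the continuous domain via a time-varying discretization of $\mathcal{Z}$ combined with the Lipschitz hypothesis on $h$, in the spirit of the continuous GP-UCB analysis of \citet{10.5555/3104322.3104451}, while taking care to carry along the collision-adjusted information term $\maxInfo_T - \mutualinfo{h(\latentrep_T,\algParam_{h,0}); \indexRVar_T}$ already isolated in the discrete case. Three ingredients drive the argument: (i) a grid whose spacing Lipschitz continuity controls to order $1/t^2$; (ii) a union bound that converts the confidence parameter $\beta_t = 2\log(|\mathcal{Z}|t^2/6\delta)$ of \propref{prp:discrete_regret} into the stated $\beta_t = 2\log(\pi^2 t^2/6\delta) + 2d\log(\lambda rdt^2)$; and (iii) the standard Cauchy--Schwarz-plus-sum-of-posterior-variances bookkeeping, modified so the information-gain term inherits the collision reduction.

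\textbf{Discretization and confidence bounds.} First I would introduce, at each round $t$, a uniform grid $\mathcal{Z}_t \subset \mathcal{Z}$ with $|\mathcal{Z}_t| = (\lambda rdt^2)^d$ points, so that every $\latentrep \in \mathcal{Z}$ has a nearest grid point $[\latentrep]_t$ with $\|\latentrep - [\latentrep]_t\|_1 \le 1/(\lambda t^2)$; Lipschitz continuity of $h$ then yields $|h(\latentrep) - h([\latentrep]_t)| \le 1/t^2$ uniformly. Summing this discretization slack over all rounds produces the constant offset captured by the ``$+2$'' in the statement. Applying the finite-set concentration bound underlying \propref{prp:discrete_regret} to $\mathcal{Z}_t$ at per-round failure probability $6\delta/(\pi^2 t^2)$ and taking a union bound over $t$ recovers precisely the $\beta_t$ prescribed in the proposition.

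\textbf{Telescoping into the regret.} With these per-round confidence bounds in hand, the usual UCB argument gives $r_t \le 2\beta_t^{1/2}\sigma_{t-1}(\latentrep_t) + 1/t^2$, and Cauchy--Schwarz yields $R_T^2 \le C_1 T \beta_T \cdot \tfrac{1}{2}\sum_t \log(1 + \sigma^{-2}\sigma_{t-1}^2(\latentrep_t)) + O(1)$. The inner sum is an information-gain quantity, and at this step I would invoke exactly the argument used in \propref{prp:discrete_regret}: under the assumed stationarity and monotonicity of the kernel together with the assumption that retraining $g$ never \emph{decreases} inter-point latent distances, the posterior variances at parameters $\algParam_{h,0}$ are dominated by those of a static embedding where collisions act as independent noise of variance $\sigma_{\col}^2$, so that the chain rule of mutual information supplies $\sum_t I_t \le \maxInfo_T - \mutualinfo{h(\latentrep_T, \algParam_{h,0}); \indexRVar_T}$. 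Assembling this with the discretization slack from the previous step delivers the advertised bound.

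\textbf{Main obstacle.} The discretization and union-bound steps are routine; the genuinely delicate point is justifying the ``$-\mutualinfo{h;\indexRVar_T}$'' subtraction in the continuous setting. In the discrete case $\indexRVar_T$ lives on a finite alphabet so conditional entropies are unambiguous; in the continuous case one must instead work with differential entropies and verify a regularity property of the distribution of $\indexRVar_T$ under the dynamics of retraining $g$. The stated monotonic-latent-distance assumption is precisely what keeps this manipulation clean, because it allows a single end-of-horizon chain-rule application; if it were dropped, the information-gain telescoping would break and one would need a coupling-based per-epoch argument to certify that posterior variances do not blow up across successive retrainings.
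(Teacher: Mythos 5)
Your proposal follows essentially the same route as the paper's own proof: the same $(\lambda r d t^2)^d$-point discretization with Lipschitz slack $1/t^2$ per round summing to the ``$+2$'', the same union bound producing the stated $\beta_t$, the same per-round bound $r_t \le 2\beta_t^{1/2}\sigma_{t-1}(\latentrep_t) + 1/t^2$, and the same Cauchy--Schwarz plus chain-rule-of-mutual-information step inherited from the discrete case. The only addition is your closing discussion of differential-entropy regularity for $\indexRVar_T$, which the paper glosses over but which does not change the argument.
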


%\begin{remark}
%\paragraph{Remark.} 
%This proposition could be regarded as an application of \citet{10.5555/3104322.3104451}, but not trivially applied on an extended space. 
Monotonically increasing pairwise distance also increases the uncertainty throughout the domain if stationary kernels like RBF kernel are applied. Therefore it would not be helpful to merely increase distances as a general practice to improve the smoothness unless the collision is addressed in such unfolding of the space. %From our point of view, 

\section{Experiments} \label{sec:experiments}

\begin{figure*}[!t]
     %\vspace{-7.5mm}
     \centering
     \begin{subfigure}[b]{0.25\textwidth}
         \centering
         \includegraphics[trim={0pt 10pt 0pt 0pt},
         width=\textwidth]{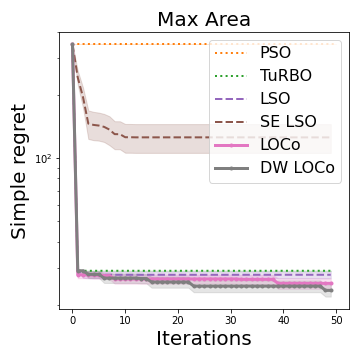}
         \caption{Max Area}
         \label{fig:max_area}
     \end{subfigure}
     \quad
      \begin{subfigure}[b]{0.25\textwidth}
         \centering
         \includegraphics[trim={0pt 10pt 0pt 0pt},
         width=\textwidth]{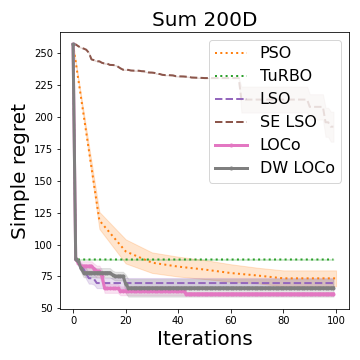}
         \caption{Sum 200D}
         \label{fig:hdbo}
     \end{subfigure}
     \quad
     \begin{subfigure}[b]{0.25\textwidth}
         \centering
         \includegraphics[trim={0pt 10pt 0pt 0pt},
         width=\textwidth]{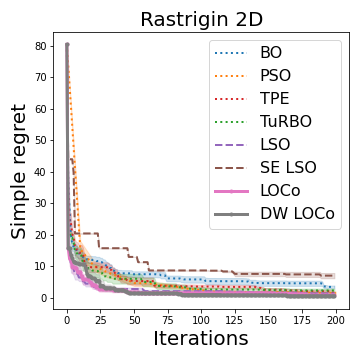}
         \caption{Rastrigin-2D}
         \label{fig:rastrigin}
     \end{subfigure}
     \\
     \begin{subfigure}[b]{0.25\textwidth}
         \centering
         \includegraphics[trim={0pt 10pt 0pt 0pt},
         width=\textwidth]{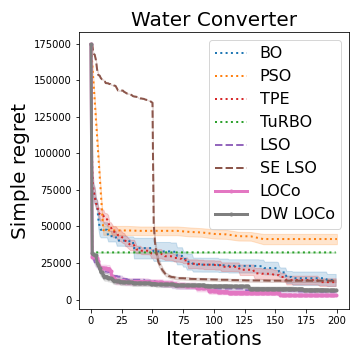}
         \caption{Water Converter}
         \label{fig:water_converter}
     \end{subfigure}
     \quad
     \begin{subfigure}[b]{0.25\textwidth}
         \centering
         \includegraphics[trim={0pt 10pt 0pt 0pt},
         width=\textwidth]{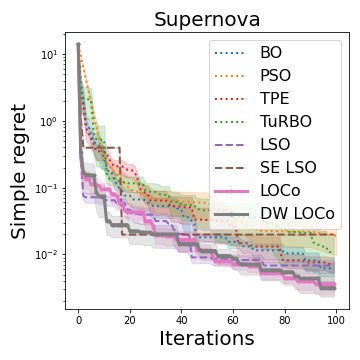}
         \caption{Supernova}
         \label{fig:supernova}
     \end{subfigure}
     \quad
     \begin{subfigure}[b]{0.25\textwidth}
         \centering
         \includegraphics[trim={0pt 10pt 0pt 0pt},
         width=\textwidth]{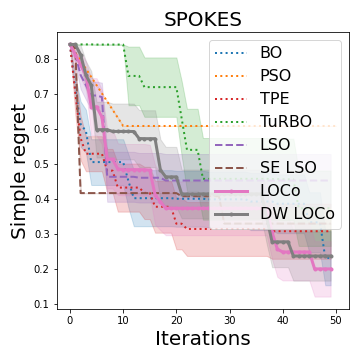}
         \caption{SPOKES}
         \label{fig:spokes}
     \end{subfigure}
     %\\
    \caption{Experiment results on six pre-collected datasets. Each experiment is repeated at least eight times. The colored area around the mean curve denotes the denotes the standard error.
    % $\frac{\hat{\sigma}}{\sqrt{n}}$. Here $\hat{\sigma}$ denotes the empirical standard deviation. $n$ denotes the number of cases repeated in experiments. 
    Note that the BO and TPE implementations we employed did not terminate in reasonable time for \selffigref{fig:max_area} and \selffigref{fig:hdbo}.}
    \label{fig:six_graphs}
    \vspace{-4mm}
\end{figure*}

% \todo{1. add discussion over results in 6.2; 2.add discussion over the new high effective dimensions dataset; 3. consider substituting one dataset to the appendix. 4.mention the ablation study of all parameter settings in the appendix.} \\
% \todo {(1) substituting the exp results. (2) add ablation study, vectorized results in the appendix.}

In this section, we empirically evaluate our algorithm on several synthetic and real-world benchmark blackbox function optimization tasks. All experiments are conducted on Google Cloud GPU instance (4 vCPUs, 15 GB memory, Tesla T4 GPU) and Google CoLab high-RAM GPU instance.

\subsection{Experimental Setup}
We consider \rebuttal{five} baselines in our experiments. Three popular optimization algorithms---particle swarm optimization (PSO) \citep{pyswarmsJOSS2018}, Tree-structured Parzen Estimator Approach (TPE) \citep{NIPS2011_86e8f7ab}, \rebuttal{a BoTorch \citep{balandat2020botorch} implementation of Trust Region Bayesian Optimization (TuRBO) \citep{eriksson2019scalable}}, and standard Bayesian optimization (BO) \citep{gpopt} which uses Gaussian processes as the statistical model---are tuned in each task. 
Another baseline we consider is the sample-efficient LSO (SE LSO) algorithm, which is implemented based on the algorithm proposed by \cite{Tripp2020SampleEfficientOI}. We also compare the non-regularized latent space optimization (LSO),
\algname with uniform weights (i.e. $\tempParam=0$, referred to as \algname), and the dynamically-weighted \algname (i.e. with $\tempParam>0$, referred to as DW \algname) proposed in this paper.

One crucial problem in practice is tuning the hyper-parameters. The hyper-parameters for GP are tuned for periodically retraining in the optimization process by minimizing the loss function on a validation set. For all our tasks, we choose a simple neural network architecture due to the reasoning in \secref{sec:collision}, as well as due to limited and expensive access to labeled data under the BO setting.
% We set retrain frequency $\tilde{T}=1$ to maximally use the observations.
The coefficient $\regWeight$ is, in general, selected to guarantee a similar order for the collision penalty to GP loss. The $\regParam$ should be tolerant of the additive noise in the evaluation. In practice, we choose the simple setting $ \regParam = 1$ and find it perform well. We also include a study of the parameter choice in the appendix. $\tempParam$ controls the aggressiveness of the importance weight. While $\tempParam$ should not be too close to zero (equivalent to uniform weight) , an extremely high value could make the regularization overly biased. Such a severe bias could allow a heavily collided representation in most of the latent space and degrade regularization effectiveness. The value choice is similar to the inverse of the temperature parameter of softmax in deep learning \citep{Hinton2015DistillingTK}. Here we use $\tempParam=1$ for simplicity and find it robust to different tasks. All experiments are conducted on the pre-collected datasets. We defer the detailed experimental setup to \appref{app:expsetup}.

\subsection{Datasets and Results}\label{sec:results}

We now evaluate \algname on three synthetic datasets and three real-world datasets. We demonstrated the improvement in \algname that is enabled by the explicit collision mitigation in the lower-dimensional latent space in terms of average simple regret. 

% \paragraph{2D Shape Area Maximization}
\paragraph{Max Area-4096D}

The dSprites dataset \citep{dsprites17} consists of images of size 64 $\times$ 64 containing 2d Shapes with different scales, rotations, and positions. Each pixel value of the images are binary, hence $\instance\in \{0,1\}^{64\times64}$. The goal is to generate a shape $\instance$ with maximum area, which is equivalent to finding  $\argmax\limits_{\instance}\sum_{i}^{64\times64} x_i$ where $i$ corresponds to the pixel index and $x_i$ is the $i$th entry of $\instance$. The neural network is pretrained on 50 data points. To meet the limitation of memory on our computing instance, we uniformly sample 10000 points from the original dataset and approximately maintain the original distribution of the objective value. The DW \algname and \algname outperform or match the baseline methods on this dataset.

\paragraph{Sum-200D}
We create a synthetic dataset Sum-200D of 200 dimensions. Each dimension is independently sampled from a standard normal distribution to maximize the uncertainty on that dimensions and examine the algorithm's capability to solve the medium-dimensional problem. We want to maximize the label $f(\instance) = \sum^{200}_{i=1}{e^{x_i}}$ which bears an additive structure and of non-linearity. The neural network is pretrained on 100 data points. As illustrated by \figref{fig:hdbo}, DW \algname and \algname could significantly outperform baselines that do not specifically leverage the additive structures of the problem. 

\paragraph{Rastrigin-2D}
The Rastrigin function is a non-convex function used as a performance test problem for optimization algorithms. It was first proposed by \citet{10018403158} and used as a popular benchmark dataset for evaluating Gaussian process regression algorithms \citep{cully2018limbo}. Concretely, the 2D Rastrigin function is
$f(\instance) = 10{d} + \sum^{d}_{i=1}{x_i^2 - 10\cos(2\pi{x_i})},\ d=2$.
For convenience of comparison, we take the $-f(\instance)$ as the objective value to make the optimization tasks a maximization task.

% The neural network is pretrained on 100 data points. As illustrated by \figref{fig:rastrigin}
% , \algname and DW \algname could quickly reach the (near-) optimal region, while the baselines generally suffer a bigger simple regret even after an excessive number of iterations.

% \paragraph{Feynman III.9.52 Equation}
% Growing datasets have motivated pure data-backed analysis in physics. The dataset of 100 equations from the \textit{Feynman Lecture for Physics} for the symbolic regression tasks in physics \citep{Udrescu2020AIFA} could play the role as a test set for data-backed analysis algorithms in physics. The III.9.52 we choose to test the optimization algorithms is $\regWeight_{\tempParam}=\frac{p_d E_f t}{h/2\pi} \frac{sin((\omega-\omega_0) t/2)^2}{((\omega-\omega_0) t/2)^2}$.
% The equations have six variables as inputs and are reported to require at least $10^3$ data for the regression task.

% The neural network is randomly initialized at the beginning. As illustrated by \figref{fig:feynman}, in the first 100 iterations, \algname and DW \algname behaves similarly to random selection. After the first training at iteration 100, \algname and DW \algname approach the optimum at a much faster pace compared to the baselines; among them, DW \algname shows a faster reduction in simple regret.

{\paragraph{Water Converter Configuration-16D}
This UCI dataset we use consists of positions and absorbed power outputs of wave energy converters (WECs) from the southern coast of Sydney. The applied converter model is a fully submerged three-tether converter called CETO. 16 WECs locations are placed and optimized in a size-constrained environment.}

\paragraph{Supernova-3D}

Our first real-world task is to perform maximum likelihood inference on three cosmological parameters, the Hubble constant $H_0\in (60,80)$, the dark matter fraction $\Omega_M\in (0,1)$, and the dark energy fraction $\Omega_A\in (0,1)$. The likelihood is given by the Robertson-Walker metric, which requires a one-dimensional numerical integration for each point in the dataset from \cite{Davis_2007}. The neural network is pretrained on 100 data points. As illustrated by \figref{fig:supernova}, both \algname and DW \algname demonstrate its consistent robustness. Among them, DW \algname slightly outperforms \algname in the early stage.

\paragraph{SPOKES-14D}
% The challenges in designing and optimizing cosmological experiments grow commensurately with their scale and complexity.
Careful accounting of all the requirements and features of these experiments becomes increasingly necessary to achieve the goals of a given cosmic survey.
SPOKES (SPectrOscopic KEn Simulation) is an end-to-end framework that can simulate all the operations and critical decisions of a cosmic survey \citep{NORD20161}.
% It can be used for the design, optimization, and forecasting of any cosmic experiment.
% For example, some cosmic survey campaigns endeavor to observe populations of galaxies that exist at a specific range of redshifts (distances) from us.
In this work, we use SPOKES to generate galaxies within a specified window of distances from Earth.
We then minimize the Hausdorff distance between the desired redshift distribution and the simulation of specific cosmological surveys generated by SPOKES. In our experiments, the neural network is pretrained with 400 data points. As illustrated by \figref{fig:spokes}, the simple regret of DW \algname drops slower yet eventually outperforms or matches other baselines' performances. 
% This result indicates that the collision problem could have more impact when the algorithm gets closer to the optimal region. 
% Notice that the rudimentary BO eventually outperformed the non-regularized LSO, indicate that without mitigation of collision, the learned representation could worsen the performance in the later stage when the algorithm gets close to the optimal. In conclusion, the mitigation of collision like \algname is necessary to further improve the later performance of LSO, when collision matters more in the near-optimal areas.

% \subsection{Discussion}
% \privatenote{key response}
In general, our experimental results consistently demonstrate the robustness of our methods against collisions in the learned latent space. Our method outperforms or matches the performance of the best baselines in all scenarios. When compared to the sample-efficient LSO, DW \algname performs better in most cases and shows a steady capability to reach the optimum by explicitly mitigating the collision in the latent space. \rebuttal{Due to the dynamics of representation learning process, it is difficult to claim that the performance improvement brought by dynamic weighting %which focuses on the near-optimal areas 
is universal. This aligned with the observation in the experiments that DW \algname brings observable improvement in the regret curve at a certain stage for an optimization task and achieve an ultimate performance that at least matches \algname.}
%\privatenote{briefly mentioning baseline}
In contrast, the sample-efficient LSO might fail due to the collision problem.

% \yuxin{end at page 8.5}

\section{Conclusion}\label{sec:conclusion}
We have proposed a novel regularization scheme for latent-space-based Bayesian optimization. Our algorithm addresses the collision problem induced by dimensionality reduction and improves the performance for latent space-based optimization algorithms. We show that the regularization effectively mitigates the collision problem in the learned latent spaces and, therefore, can boost the performance of the Bayesian optimization in the latent space. We demonstrate solid empirical results for \algname on several synthetic and real-world datasets. Furthermore, we demonstrate that \algname can deal with high-dimensional input that could be highly valuable for real-world experiment design tasks such as cosmological survey scheduling. 

\section*{Acknowledgment}

The authors thank James Bowden, Jialin Song, Thomas Desautels, and Yisong Yue for the helpful discussions. The project was supported in part by NSF grant \#2037026 and a JTFI AI + Science Grant provided by the Center for Data and Computing (CDAC) at the University of Chicago. Any opinions, findings, and conclusions or recommendations expressed in this material are those of the authors and do not necessarily reflect the views of any funding agencies.

%\clearpage
% \bibliographystyle{plainnat}
\bibliography{references}
% \bibliography{uai2022-template}
\onecolumn
\clearpage
\begin{appendix}
% \todo{1. Elaborate the settings of figure 1 in appendix, if possible find the code. 2. Add parameter study of section 6.}
\section{Proofs}\label{app:proofs}
% \todo{1) fix the statement; 2) add training results; 3) add ablation study;}

In this section, we provide proofs for our main theoretical results (Proposition \ref{prp:discrete_regret} and Proposition \ref{prp:continuous_regret}).

\subsection{\rebuttal{Proof of Proposition \ref{prp:discrete_regret}: 
Regret Bound on Discrete Decision Set}} 
%Regret Bound for a Objective Function on Dynamically Learned Representation Space}
% In \propref{prp:discrete_regret}, the collision regularization reduced the maximum mutual information by a specific term decided by the distribution of the noise caused by the collision of data points. The distribution is dynamic and determined by the complex learning process of the neural network. Thus, instead of giving a rigorous formula of it, we offer an estimation of its range.

% Assume $\indexRVar_t$ is a random variable that identify $\latentrep_t \in \mathcal{Z}$,
% $y_t \in \mathcal{Y}$ from its collided points, and the variance of the collision is $\sigma_{\col}^2$, then we have
% $$ 0\leq \mutualinfo{h(\latentrep_T, \algParam_{h,0}); \indexRVar_T} \leq 1/2\log|2\pi e \sigma^2_{\col}I|$$

% This means that if $\indexRVar_t$ is a random variable sampled from a Gaussian distribution defined on $h$, then $\mutualinfo{h(\latentrep_T, \algParam_{h,0}); \indexRVar_T}$ is maximized. A similar result could be achieved leveraging the desired Lipschitz continuity on the regularized space to move forward to a continuous decision set.

%\rebuttal{Before providing the proof for Prop.., we first introduce several useful lemmas.}

We follow the proof structure in \citet{10.5555/3104322.3104451} and introduce new notations to characterize the learning process of the neural network and the collision in the proof.

Before proving Proposition \ref{prp:discrete_regret}, we first introduce a few useful lemmas.

\begin{lemma}\label{lem: discrete_ucb_1}
Pick $\delta\in(0,1)$ and set $\beta_t=2\log(|\DataSet|\pi_t/\delta)$, where $ \sum_{t\geq1}{\pi_t^{-1}}=1$, $\pi_t > 0$. Then with probability $\geq 1-\delta$, $\forall \instance\in \DataSet, \forall t\geq 1$
$$|h(g(\instance, \algParam_{g,t-1}), \algParam_{t-1}) - \mu_{t-1}| \leq \beta_t^{1/2}\sigma_{t-1}(g(\instance, \algParam_{g,t-1}))$$
%\yuxin{double check this proof. $\mu_{t-1} \rightarrow \mu_{t-1}$?}
Here $\algParam_{g,t-1}$ is the parameter for $g$ at time step $t-1$. $\algParam_{h, t-1}$ is the parameter for $h$ at time step $t-1$.
\end{lemma}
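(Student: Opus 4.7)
The plan is to adapt the standard Gaussian concentration argument of \citet{10.5555/3104322.3104451} (their Lemma 5.1) to the dynamic-embedding setting, where the latent representation $g(\instance, \algParam_{g,t-1})$ and the head $h(\cdot, \algParam_{h,t-1})$ evolve with $t$. The overall structure is: condition on the history, reduce the statement to a one-dimensional Gaussian tail bound, then union-bound over the finite set $\DataSet$ and over $t\geq 1$ using the prescribed weights $\pi_t$.

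First I would fix $t\geq 1$ and $\instance\in \DataSet$ and condition on the $\sigma$-algebra $\mathcal{F}_{t-1}$ generated by the past queries $\Selected_{t-1}$, the observations $\by_{t-1}$, and the retrained parameter $\algParam_{t-1}=(\algParam_{g,t-1},\algParam_{h,t-1})$ obtained from \algoref{alg:main}. Under this conditioning, the map $\instance\mapsto g(\instance,\algParam_{g,t-1})$ is deterministic, so the latent point $\latentrep := g(\instance,\algParam_{g,t-1})$ is a fixed element of $\mathcal{Z}$. Since $h(\cdot,\algParam_{h,t-1})$ is a sample from a GP with posterior mean $\mu_{t-1}(\latentrep)$ and posterior variance $\sigma_{t-1}^2(\latentrep)$ computed from $\mathcal{F}_{t-1}$, the random variable $h(\latentrep,\algParam_{h,t-1})-\mu_{t-1}(\latentrep)$ is conditionally $\mathcal{N}(0,\sigma_{t-1}^2(\latentrep))$.

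Next I would invoke the standard Gaussian tail bound: if $X\sim\mathcal{N}(0,1)$, then $\Pr{|X|>c}\leq e^{-c^2/2}$ for $c\geq 0$. Setting $c=\beta_t^{1/2}$, this gives
\[
\Pr{|h(\latentrep,\algParam_{h,t-1})-\mu_{t-1}(\latentrep)|>\beta_t^{1/2}\sigma_{t-1}(\latentrep)\,\big|\,\mathcal{F}_{t-1}}\leq e^{-\beta_t/2}.
\]
Taking expectations removes the conditioning, and plugging in $\beta_t=2\log(|\DataSet|\pi_t/\delta)$ yields per-$(\instance,t)$ failure probability at most $\delta/(|\DataSet|\pi_t)$.

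Finally, a union bound over the $|\DataSet|$ elements $\instance\in\DataSet$ and over $t\geq 1$ gives total failure probability at most $\sum_{t\geq 1}|\DataSet|\cdot\delta/(|\DataSet|\pi_t)=\delta\sum_{t\geq 1}\pi_t^{-1}=\delta$, as required. The main subtlety, and the part I expect to require the most care, is justifying the conditional-Gaussianity step when both the embedding $g(\cdot,\algParam_{g,t-1})$ and the head $h(\cdot,\algParam_{h,t-1})$ are retrained from past data: I would explicitly include the retrained parameters in the conditioning $\sigma$-algebra so that the neural network is frozen at time $t-1$, reducing the situation to a standard GP posterior on a (fixed, deterministic-given-history) latent input. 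The monotonicity and stationarity assumptions on the kernel are not needed for this lemma itself but will be invoked later when relating $\sigma_{t-1}$ to the mutual information term.
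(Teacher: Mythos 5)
Your proposal is correct and follows essentially the same route as the paper's proof: conditional Gaussianity of the posterior given the history, the tail bound $e^{-\beta_t/2}$, and a union bound over $\DataSet$ and over $t$ via the weights $\pi_t$. Your explicit inclusion of the retrained parameters $\algParam_{t-1}$ in the conditioning $\sigma$-algebra is a slightly more careful articulation of what the paper states as ``$\{x_1,\dots,x_{t-1}\}$ is deterministically conditioned on $\by_{t-1}$,'' but it is the same argument.
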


\begin{proof}
Fix $t\geq 1$. $\forall \instance\in \DataSet$, $\{x_1,...,x_{t-1}\}$ is deterministically conditioned on $\by_{t-1}=(y_1, ..., y_{t-1})$, %\yuxin{i'm not sure what the above statement means}

and $h(g(\instance, \algParam_{g,t-1}), \algParam_{h, t-1})\sim N(\mu_{t-1}(g(\instance, \algParam_{g,t-1})), \sigma^2_{t-1}(g(\instance, \algParam_{g,t-1})))$.
Then using the subgaussianity of $h$, we have
\begin{align*}
    &\Pr{|h(g(\instance, \algParam_{g,t-1}), \algParam_{t-1}) - \mu_{t-1}| \geq \beta_t^{1/2}\sigma_{t-1}(g(\instance, \algParam_{g,t-1}))} \\
    &\leq e^{-\beta_t/2}
\end{align*}
% \yuxin{$\mu_{t-1}$?}

Applying the union bound, with probability $\geq 1 - |\DataSet|e^{-\beta_t/2}$, $\forall \instance \in \DataSet$

$$|h(g(\instance, \algParam_{g,t-1}), \algParam_{t-1}) - \mu_{t-1}| \leq \beta_t^{1/2}\sigma_{t-1}(g(\instance, \algParam_{g,t-1}))$$

Let $|\DataSet|e^{-\beta_t/2} = \delta/\pi_t$, applying the union bound for $\forall t\in \nats$ the statement holds.
\end{proof}

%\privatenote{either highlight the assumption as part of the descriptive text, or make it as an assumption}
\begin{lemma}\label{lem: discrete_ucb_2}
\rebuttal{Consider a stationary and monotonic kernel, and assume that retraining the neural network $g$ does not decrease the distance between data points in the latent space. Then $\forall t\geq 1$,
}
%Assume the regularization only increase the distance between data points and the kernel function $k$ is a stationary kernel.
%If \lemref{lem: discrete_ucb_1} hold \privatenote{Lemma 1 is already proved -- do we need this assumption?}, then $\forall t\geq 1$.
$$r_t(\algParam_{t-1}) \leq
2\beta^{1/2}_t\sigma_{t-1}(g(\instance, \algParam_{g,t-1}))
\leq 2\beta^{1/2}_t\sigma_{t-1}(g(\instance, \algParam_{g,T})).$$
Here $\algParam_{g,T}$ is the parameter for $g$ at time step $T$.
\end{lemma}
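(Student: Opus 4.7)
Plan. I would split the chain of inequalities at $2\beta_t^{1/2}\sigma_{t-1}(g(\instance_t,\algParam_{g,t-1}))$: the first half is a textbook UCB per-step regret bound transported to the dynamic-embedding setting, while the second half is a monotonicity statement about the posterior standard deviation as the latent embedding evolves. The main obstacle will be the second half---namely turning the hypothesised monotonicity of pairwise latent-space distances into a monotonicity of the posterior variance.

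For the first inequality, let $\instance^{\star}$ attain $\max_{\instance\in\DataSet} h(g(\instance,\algParam_{g,t-1}),\algParam_{h,t-1})$. Applying Lemma \ref{lem: discrete_ucb_1} at $\instance^{\star}$ on the high-probability event of that lemma bounds $h(g(\instance^{\star},\algParam_{g,t-1}),\algParam_{h,t-1})$ above by $\mu_{t-1}(g(\instance^{\star},\algParam_{g,t-1}))+\beta_t^{1/2}\sigma_{t-1}(g(\instance^{\star},\algParam_{g,t-1}))$. Since $\instance_t$ maximises the UCB criterion at time $t$, this in turn is at most $\mu_{t-1}(g(\instance_t,\algParam_{g,t-1}))+\beta_t^{1/2}\sigma_{t-1}(g(\instance_t,\algParam_{g,t-1}))$. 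A second invocation of Lemma \ref{lem: discrete_ucb_1} at $\instance_t$ rewrites $\mu_{t-1}$ in terms of $h$ plus another $\beta_t^{1/2}\sigma_{t-1}$ contribution, which after rearranging gives $r_t(\algParam_{t-1}) \leq 2\beta_t^{1/2}\sigma_{t-1}(g(\instance_t,\algParam_{g,t-1}))$, matching the first inequality.

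For the second inequality, the assumption that retraining does not shrink latent pairwise distances gives $\|g(\instance_i,\algParam_{g,T})-g(\instance_j,\algParam_{g,T})\| \geq \|g(\instance_i,\algParam_{g,t-1})-g(\instance_j,\algParam_{g,t-1})\|$ for every pair drawn from the evaluated points and the query $\instance_t$. Stationarity together with monotonicity of the kernel (decreasing in pairwise distance) then imply that every off-diagonal entry of $\GramMat_{t-1}$ and every coordinate of the cross-kernel vector $k_{t-1}(\cdot)$ is non-increasing when re-expressed under $\algParam_{g,T}$, while the diagonal entries of $\GramMat_{t-1}$ and the prior variance $k(z,z)$ remain fixed.

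The technical heart of the argument is then to propagate this entrywise shrinkage through the Schur-complement form $\sigma_{t-1}^2(z) = k(z,z) - k_{t-1}(z)^\top(\GramMat_{t-1}+\sigma^2 I)^{-1}k_{t-1}(z)$ into the claim that $\sigma_{t-1}(g(\instance_t,\algParam_{g,t-1})) \leq \sigma_{t-1}(g(\instance_t,\algParam_{g,T}))$. The cleanest route I would try is to add the training points one at a time and apply the Sherman--Morrison update $\sigma_{i+1}^2 = \sigma_i^2 - c_i^2/(s_i^2+\sigma^2)$, arguing inductively that each rank-one variance reduction is smaller in magnitude under the post-training embedding because every kernel quantity involved has been pushed downward. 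A fallback, should the joint shrinkage of $\GramMat_{t-1}$ and $k_{t-1}(\cdot)$ make the inductive step delicate, is a monotonicity-of-information argument on the jointly Gaussian vector $(h(z),\by_{t-1})$: stretching the latent configuration makes observations less correlated with $h(z)$, which can only weaken the information they carry and therefore can only increase the posterior variance. Either route yields the stated bound.
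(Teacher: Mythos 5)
Your handling of the first inequality is exactly the paper's argument: two applications of \lemref{lem: discrete_ucb_1} (once at the maximizer, once at $\instance_t$) sandwiching the UCB selection rule, yielding $r_t(\algParam_{t-1})\le 2\beta_t^{1/2}\sigma_{t-1}(g(\instance_t,\algParam_{g,t-1}))$. Nothing to add there.

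The second inequality is where the two proofs diverge and where the real issue lies. The paper disposes of it in a single sentence---non-decreasing distances plus a stationary, monotonic kernel ``lead to larger variance''---with no derivation. You correctly identify this as the technical heart and try to supply one, but neither of your two routes closes the gap, and in fact the claim does not follow from the stated hypotheses alone. The obstruction is that $(\GramMat_{t-1}+\sigma^2 I)^{-1}$ is not entrywise monotone in $\GramMat_{t-1}$, so entrywise shrinkage of the kernel quantities does not propagate through the Schur complement. Concretely, take two training points with unit prior variance, prior correlation $\rho\ge 0$, and equal cross-covariance $k$ to the query $\latentrep$; then
$$k_{t-1}(\latentrep)^\top(\GramMat_{t-1}+\sigma^2 I)^{-1}k_{t-1}(\latentrep)=\frac{2k^2}{1+\sigma^2+\rho},$$
which \emph{increases} as $\rho$ decreases. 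The hypothesis only forbids pairwise distances from shrinking, so retraining may push the two training points apart (decreasing $\rho$) while leaving their distances to the query unchanged (fixing $k$); the posterior variance at the query then strictly decreases, reversing the claimed inequality. The same example defeats your ``less correlated observations carry less information'' fallback: decorrelating the observations removes redundancy and makes them jointly \emph{more} informative about $h(\latentrep)$. The Sherman--Morrison induction has the analogous defect: the quantities $c_i$ and $s_i^2$ in the rank-one update are posterior covariances and variances, not prior kernel evaluations, and $s_i^2$ sits in the denominator, so ``everything has been pushed downward'' does not determine the sign of the change in $c_i^2/(s_i^2+\sigma^2)$. To make the step go through one needs a strictly stronger hypothesis---for instance that only the query-to-training distances grow while the training configuration (hence $\GramMat_{t-1}$) is held fixed, or a direct assumption that retraining does not decrease posterior variances---which is effectively what the paper is implicitly assuming in its last line. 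Your instinct that the inductive step is delicate is right; the honest conclusion is that the final inequality requires an additional assumption rather than following from the stated one.
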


\begin{proof}
The following holds:
\begin{align*}
r_t(\algParam_{t-1})
& = h(g(x^*, \algParam_{g, t-1}), \algParam_{h,t-1}) - h(g(\instance_t, \algParam_{g, t-1}), \algParam_{h,t-1}) \\
& \leq
\beta^{1/2}_t\sigma_{t-1}(g(x^*, \algParam_{g,t-1})) + \mu_{t-1}(g(x^*, \algParam_{g, t-1}) - h(g(\instance_t, \algParam_{g, t-1}), \algParam_{h,t-1})\\
& \leq
\beta^{1/2}_t\sigma_{t-1}(g(\instance_t, \algParam_{g,t-1})) + \mu_{t-1}(g(\instance_t, \algParam_{g, t-1}) - h(g(\instance_t, \algParam_{g, t-1}), \algParam_{h,t-1}) \\
& \leq
2\beta^{1/2}_t\sigma_{t-1}(g(\instance_t, \algParam_{g,t-1})) \\
& \leq
2\beta^{1/2}_t\sigma_{t-1}(g(\instance_t, \algParam_{g,T}))
\end{align*}
\rebuttal{The last line is because the non-decreasing distance between $g(\instance, \algParam_{g,T})$ and $g(\instance', \algParam_{g,T})$ $\forall \instance, \instance'\in \DataSet$ leads to larger variance $\sigma_t$ when using a stationary and monotonic kernel}.
\end{proof}

\begin{lemma}\label{lem: mutual_information_1}
The information gain for the points selected can be expressed in terms of the predictive variance. If $\bh_T=(h(g(\instance_t,\algParam_{g,t}), \algParam_{h,t}))_{t\in{(1,...,T)}} \in \reals^T$: %\yuxin{why parenthesis around $h$? shouldn't subscript $T$ be a running index $t$? if we need to represent a vector, let's write it down explicitly}:

$$\mutualinfo{\by_T;\bh_T}=\frac{1}{2}\sum_{t=1}^{T}\log(1+\sigma^{-2}\sigma^2_{t-1}(g(\instance_t,\algParam_{g,T})))$$
\end{lemma}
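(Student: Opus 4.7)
The plan is to use the standard chain-rule decomposition of mutual information and the closed-form Gaussian entropy formula, adapted to the dynamic-embedding setting where the network parameters $\algParam_{g,t}$ evolve with $t$. First, I would recall that for any Gaussian vector with covariance $\Sigma$, the differential entropy is $\frac{1}{2}\log|2\pi e \Sigma|$, and that the observation model $y_t = h(g(\instance_t, \algParam_{g,t}), \algParam_{h,t}) + \epsilon_t$ with independent Gaussian noise $\epsilon_t\sim \mathcal{N}(0,\sigma^2)$ gives $H(\by_T\mid\bh_T) = \tfrac{T}{2}\log(2\pi e\sigma^2)$.

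Next, I would apply the telescoping identity
\begin{align*}
\mutualinfo{\by_T;\bh_T} = \sum_{t=1}^{T} \mutualinfo{y_t;\bh_T\mid \by_{t-1}} = \sum_{t=1}^{T}\bigl(H(y_t\mid\by_{t-1}) - H(y_t\mid\by_{t-1},\bh_T)\bigr).
\end{align*}
Because $y_t - h_t$ is independent Gaussian noise, $H(y_t\mid\by_{t-1},\bh_T) = \tfrac{1}{2}\log(2\pi e\sigma^2)$. For the marginal conditional entropy, I would use the GP posterior: conditioned on $\by_{t-1}$, $h_t$ is Gaussian with some posterior variance, and hence $y_t\mid\by_{t-1}\sim\mathcal{N}(\cdot,\,\sigma^2 + \sigma^2_{t-1}(g(\instance_t,\cdot)))$. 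Subtracting the two entropies produces the per-step term $\tfrac{1}{2}\log(1+\sigma^{-2}\sigma^2_{t-1})$, and summing gives the claimed formula.

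The main obstacle, and the step that needs the most care, is reconciling the fact that $\bh_T$ is evaluated at the time-varying parameters $\algParam_{g,t}$ and $\algParam_{h,t}$, while the posterior variance on the right-hand side is written with the terminal parameter $\algParam_{g,T}$. To handle this, I would invoke the monotonicity assumption stated before Proposition~\ref{prp:discrete_regret} (retraining does not decrease pairwise latent-space distances under a stationary, monotonic kernel), which ensures that the posterior covariance computed at step $t$ under $\algParam_{g,t}$ agrees (or can be identified) with the quantity $\sigma^2_{t-1}(g(\instance_t,\algParam_{g,T}))$ used in the statement, in the sense needed for the chain-rule decomposition to go through. This is essentially the same ``pretend the final embedding was used all along'' argument that will later justify substituting $\maxInfo_T$ into the regret bound in Lemma~\ref{lem: discrete_ucb_2}.

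Finally, I would close the proof by collecting the per-step terms:
\begin{align*}
\mutualinfo{\by_T;\bh_T} = \sum_{t=1}^{T}\tfrac{1}{2}\log\!\bigl(1+\sigma^{-2}\sigma^2_{t-1}(g(\instance_t,\algParam_{g,T}))\bigr),
\end{align*}
matching the lemma. I expect no further technical subtleties beyond the one noted above, since everything else is the standard GP information-gain calculation carried over verbatim from the analysis of \citet{10.5555/3104322.3104451}.
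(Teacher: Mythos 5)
Your proof is correct and follows essentially the same route as the paper's: both are the standard Gaussian information-gain calculation of \citet{10.5555/3104322.3104451}, with the paper computing $H(\by_T)-H(\by_T\mid\bh_T)$ by recursively expanding $H(\by_T)$ and you expanding $\mutualinfo{\by_T;\bh_T}$ term-by-term via the mutual-information chain rule --- an equivalent reorganization of the same argument. Your explicit appeal to the monotonicity assumption to reconcile the time-varying parameters $\algParam_{g,t}$ with the terminal $\algParam_{g,T}$ is in fact more careful than the paper's proof, which makes that substitution without comment.
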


\begin{proof}
First, we get $\mutualinfo{\by_T;\bh_T}=H(\by_T)-\frac{1}{2}\log|2\pi e\sigma^2\mathbf{I}|$. Then,
\begin{align*}
H(\by_T) & =H(\by_{T-1})+H(y_{T}|\by_{T-1}) \\
         & =H(\by_{T-1})+\log(2\pi e(\sigma^2_{t-1}(g(\bx_T, \algParam_{g,T}))))
\end{align*}
Since $x_1, \dot, x_T$ are deterministic 
% \todo{is it ``Since $\instance_T$ is deterministic?''}
conditioned on $\by_{T-1}$. The result follows by induction.
\end{proof}

\begin{lemma}\label{lem: mutual_information_2}
\rebuttal{The gap of the mutual information between collision-free $g(\instance_t,\algParam_{g,T})$ and unregularized $g(\instance_t,\algParam_{g,0})$ is
\begin{align*}
    \mutualinfo{\by_T; h(\latentrep_T, \algParam_{h, T})} & = \mutualinfo{\by_T; h(\latentrep_T,\algParam_{h, T})| \indexRVar_T } 
    \\
    & = \mutualinfo{\by_T; h(\latentrep_T, \algParam_{h, 0})| \indexRVar_T} 
    \\
    & = \mutualinfo{\by_T, \indexRVar_T; h(\latentrep_T, \algParam_{h, 0})} - \mutualinfo{h(\latentrep_T, \algParam_{h, 0}); \indexRVar_T)}
\end{align*}
Here $ \latentrep_t = g(\instance_t,\algParam_{g,T})$, and $\indexRVar$ is the identification of collided data points.}
\end{lemma}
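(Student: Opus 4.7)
The three equalities decompose the mutual information between observations and the retrained function outputs by introducing the collision identifier $\indexRVar_T$. I would handle them separately, relying on standard information-theoretic chain rules together with conditional independence properties of $\indexRVar_T$ that stem from its definition as the identifier of collided data points.

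For the rightmost equality, the plan is to apply the chain rule of mutual information, $\mutualinfo{A, B; C} = \mutualinfo{A; C} + \mutualinfo{B; C \mid A}$, with $A = \indexRVar_T$, $B = \by_T$, and $C = h(\latentrep_T, \algParam_{h,0})$. This gives $\mutualinfo{\by_T, \indexRVar_T; h(\latentrep_T, \algParam_{h,0})} = \mutualinfo{\indexRVar_T; h(\latentrep_T, \algParam_{h,0})} + \mutualinfo{\by_T; h(\latentrep_T, \algParam_{h,0}) \mid \indexRVar_T}$, and rearranging produces the stated identity. This step is purely algebraic and requires no extra assumption.

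For the middle equality, the key point is that once $\indexRVar_T$ is fixed, the output of $h$ evaluated with the retrained parameters $\algParam_{h,T}$ and with the initial parameters $\algParam_{h,0}$ carry the same information about $\by_T$. Intuitively, $\indexRVar_T$ absorbs exactly the ambiguity that the retraining was designed to resolve, so conditional on $\indexRVar_T$ the two function evaluations become interchangeable as sufficient statistics for $\by_T$. I would formalize this by arguing that the retraining process affects the joint law of $(h, \by_T)$ only through how it distinguishes collided points in the latent space, so that the conditional laws of $(\by_T \mid h(\latentrep_T,\algParam_{h,T}), \indexRVar_T)$ and $(\by_T \mid h(\latentrep_T,\algParam_{h,0}), \indexRVar_T)$ agree; the equality of the two conditional mutual informations is then immediate.

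For the first equality, I would invoke the conditional independence $\by_T \perp \indexRVar_T \mid h(\latentrep_T, \algParam_{h,T})$, which encodes that, given the retrained function value, knowing which collision class a point falls into provides no additional information about the observation noise. Combined with the fact that $\indexRVar_T$ is a structural property of the representation $g$ rather than of the independent Gaussian noise $\epsilon_t$, so that $\mutualinfo{\by_T;\indexRVar_T}$ and its conditional counterpart cancel via the symmetry of the interaction information, this yields $\mutualinfo{\by_T; h(\latentrep_T,\algParam_{h,T})} = \mutualinfo{\by_T; h(\latentrep_T,\algParam_{h,T}) \mid \indexRVar_T}$. The main obstacle will be formalizing the first two equalities: both require precise specification of the joint distribution of $(\by_T, h(\latentrep_T, \algParam_{h,T}), h(\latentrep_T, \algParam_{h,0}), \indexRVar_T)$ and of what exactly $\indexRVar_T$ encodes about the retraining dynamics. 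The proof ultimately rests on treating $\indexRVar_T$ as a sufficient statistic for the effect of retraining on the predictive distribution, an assumption that should be stated explicitly before the chain-rule manipulations can be carried out cleanly.
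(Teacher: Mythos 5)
Your proposal follows essentially the same route as the paper: the paper's entire proof is the remark that the result is ``a simple application of the chain rule of mutual information,'' with the first two equalities justified only by interpreting $\mutualinfo{\by_T;h(\latentrep_T,\algParam_{h,T})}$ as the collision-free information gain and $\mutualinfo{\by_T,\indexRVar_T;h(\latentrep_T,\algParam_{h,0})}$ as the unregularized one, and your chain-rule derivation of the third equality is exactly that argument. Your treatment is in fact more careful than the paper's, since you explicitly identify the conditional-independence and sufficient-statistic assumptions on $\indexRVar_T$ that the first two equalities silently require.
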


The result is a simple application of the chain rule of mutual information. $\mutualinfo{\by_T;\bh_T}=\mutualinfo{\by_T; h(\latentrep_T, \algParam_{h, T})}$ corresponds to the information gain under fully regularized and assumed collision-free setting. $\mutualinfo{\by_T, \indexRVar_T; h(\latentrep_T, \algParam_{h, 0})}$ corresponds to information gain under unregularized setting.

\begin{lemma}\label{lem: discrete_regret}
Pick $\delta \in (0,1)$ and let $\beta_t$ be defined as in \lemref{lem: discrete_ucb_1}. $\forall T\geq 1$, the following holds with probability at least $1 - \delta$:
$$ \sum_{t=1}^T r^2_t \leq \beta_TC_1\mutualinfo{\by_T; \bh_T} \leq C_1\beta_T(\maxInfo_T-\mutualinfo{h(\latentrep_T, \algParam_{h,0}); \indexRVar_T}).
$$
Here $C_1 \coloneqq \frac{8}{\log(1+\sigma^{-2})} \geq 8\sigma^2$.
\end{lemma}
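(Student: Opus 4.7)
The plan is to combine the pointwise confidence bound on the instantaneous regret from \lemref{lem: discrete_ucb_2} with an information-theoretic bookkeeping of the posterior variances, in the spirit of the classical GP-UCB regret analysis of Srinivas et al., and then to pay off the information-gain term against the maximum information gain $\maxInfo_T$ after accounting for the collision random variable $\indexRVar_T$ via \lemref{lem: mutual_information_2}.

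\textbf{First inequality.} I would start by squaring the bound from \lemref{lem: discrete_ucb_2}, giving $r_t^2 \leq 4\beta_t\,\sigma_{t-1}^2(g(\instance_t,\algParam_{g,T}))$ for every $t$. Since $\beta_t$ is nondecreasing in $t$, summing over $t \leq T$ yields $\sum_{t=1}^T r_t^2 \leq 4\beta_T \sum_{t=1}^T \sigma_{t-1}^2(g(\instance_t,\algParam_{g,T}))$. The next step is the standard elementary inequality $s \leq \tfrac{1}{\log(1+\sigma^{-2})}\log(1+\sigma^{-2}s)$ valid for $s\in[0,1]$, which after noting that $\sigma_{t-1}^2\leq k(\cdot,\cdot)\leq 1$ (WLOG via kernel normalization) converts the sum of variances into a sum of logarithms. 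By \lemref{lem: mutual_information_1}, $\tfrac{1}{2}\sum_{t=1}^T\log(1+\sigma^{-2}\sigma_{t-1}^2)=\mutualinfo{\by_T;\bh_T}$, and chaining these inequalities produces exactly $\sum_{t=1}^T r_t^2 \leq \tfrac{8\beta_T}{\log(1+\sigma^{-2})}\mutualinfo{\by_T;\bh_T} = C_1\beta_T\mutualinfo{\by_T;\bh_T}$.

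\textbf{Second inequality.} Here I would apply \lemref{lem: mutual_information_2} to rewrite
\[
\mutualinfo{\by_T;\bh_T} \;=\; \mutualinfo{\by_T,\indexRVar_T;h(\latentrep_T,\algParam_{h,0})} \;-\; \mutualinfo{h(\latentrep_T,\algParam_{h,0});\indexRVar_T}.
\]
The first term on the right is the information gain evaluated on the selected points $\Selected_T\subseteq\mathcal{Z}$ against the \emph{unregularized} latent field, which by the definition $\maxInfo_T\coloneqq\max_{\Selected\subseteq\mathcal{Z},|\Selected|=T}\mutualinfo{y_\Selected,\indexRVar_T;h(\Selected,\algParam_T)}$ is upper bounded by $\maxInfo_T$. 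Substituting yields the claimed bound $\sum_{t=1}^T r_t^2 \leq C_1\beta_T(\maxInfo_T-\mutualinfo{h(\latentrep_T,\algParam_{h,0});\indexRVar_T})$.

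\textbf{Probabilistic bookkeeping and main obstacle.} The high-probability qualifier is inherited from \lemref{lem: discrete_ucb_1}, since \lemref{lem: discrete_ucb_2} (and hence the squared-regret bound) holds on the same $1-\delta$ event; no further union bound is needed. The routine obstacle is the variance-normalization step: the inequality $s\leq \tfrac{\log(1+\sigma^{-2}s)}{\log(1+\sigma^{-2})}$ requires $\sigma_{t-1}^2\leq 1$, which must be justified either by assuming a normalized kernel or by absorbing the kernel's supremum into $C_1$. The more conceptually delicate step is the second inequality: one must verify that the monotonicity-of-distances hypothesis together with the stationary kernel assumption makes $\sigma_{t-1}^2(g(\instance,\algParam_{g,T}))$ a legitimate upper bound on $\sigma_{t-1}^2(g(\instance,\algParam_{g,t-1}))$ (already handled in \lemref{lem: discrete_ucb_2}) and that the identification variable $\indexRVar_T$ is consistent with the definition of $\maxInfo_T$, so that the chain-rule rewriting of \lemref{lem: mutual_information_2} cleanly produces a nonnegative correction term $\mutualinfo{h(\latentrep_T,\algParam_{h,0});\indexRVar_T}$ that is reduced by the collision-free regularizer.
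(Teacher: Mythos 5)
Your proposal is correct and follows essentially the same route as the paper's own (very terse) proof: square the bound from \lemref{lem: discrete_ucb_2}, apply the elementary inequality $s\sigma^{-2}\leq \frac{\sigma^{-2}}{\log(1+\sigma^{-2})}\log(1+\sigma^{-2}s)$ to the posterior variances, convert the resulting sum of logarithms to $\mutualinfo{\by_T;\bh_T}$ via \lemref{lem: mutual_information_1}, and then invoke \lemref{lem: mutual_information_2} together with the definition of $\maxInfo_T$ for the second inequality. Your explicit remarks on the $\sigma_{t-1}^2\leq 1$ normalization and on bounding the unregularized information gain by $\maxInfo_T$ make precise steps the paper leaves implicit, but the argument is the same.
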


\begin{proof}
We first observe that
\begin{align*}
    4\beta_t\sigma^2_{t-1}(g(\instance_t, \algParam_{g, T}))
    & \leq
    4\beta_t\sigma^2(\sigma^{-2}\sigma^2_{t-1}(g(\instance_t, \algParam_{g, T}))) \\
    & \leq
    4\beta_t\sigma^2(\frac{\sigma^{-2}}{\log(1+\sigma^{-2})})\log(1+\sigma^{-2}\sigma^2_{t-1}(g(\instance_t, \algParam_{g, T})))
\end{align*}
Combining the above inequality with \rebuttal{\lemref{lem: discrete_ucb_2}}, \lemref{lem: mutual_information_1} and \lemref{lem: mutual_information_2} completes the proof.
\end{proof}

Now we are ready to prove Proposition \ref{prp:discrete_regret}.
\begin{proof}[Proof of Proposition \ref{prp:discrete_regret}]
%\rebuttal{slightly rephrase by adding transitioning sentences...} 
%\rebuttal{, we are ready to prove Proposition 1}.
\propref{prp:discrete_regret} is a simple consequence of \lemref{lem: mutual_information_2} and \lemref{lem: discrete_regret} and Cauchy-Schwarz inequality.
\end{proof}

\subsection{Proof of Proposition \ref{prp:continuous_regret}: Regret Bound with Lipschitz-Continuous Objective Function}
Proposition \ref{prp:continuous_regret} could be regarded as an application of \citet{10.5555/3104322.3104451}, but requires non-trivial adaptations of their proof technique.
%trivial derivative applying the previous theorem on an extended space.
%}
% \yuxin{polish later: This proposition could be regarded as an application of \citet{10.5555/3104322.3104451}, but not trivially applied on an extended space. }
%
In the following, we first modify Lemma 5.7 and Lemma 5.8 in \cite{10.5555/3104322.3104451} since we are assuming the deterministic Lipschitz-continuity for $h$. 
Inspired by the prior work, we use %the same analysis tool 
$\LatentRepSet_t$ defined as a set of discretization $\LatentRepSet_t \subset \mathcal{Z}$ 
%where $\LatentRepSet_t$  will be used 
at time $t$ in the analysis. %\yuxin{pls double check notations here. I changed $Z\rightarrow \mathcal{Z}$, $\LatentRepSet_t \rightarrow \LatentRepSet_t$}
We choose a discretization $\LatentRepSet_t$ of size $(\tau_t)^d$. so that $\forall \latentrep \in \mathcal{Z}$,
\begin{equation}\label{eq:discretization}
    ||\latentrep-\latentrep_t||_1 \leq rd/\tau_t
\end{equation}
where $\latentrep_t$ denotes the closest point in $\LatentRepSet_t$ to $\latentrep$.

\begin{lemma}\label{lem: cont_ucb_1}
Pick $\delta \in (0,1)$ and set $\beta=2\log(\pi_t\delta) + 2d\log(Lrdt^2)$, where  $\sum_{t\geq1}\pi^{-1}_t=1, \ \pi_t > 0$. Let $\tau_t=Lrdt^2$. Hence then
$$|h(\latentrep^*, \algParam_{h, t-1})-\mu_{t-1}([\latentrep^*]_t)| \leq \beta^{1/2}_t \sigma_{t-1}([\latentrep^*]_t) + 1/t^2\quad \forall t\geq 1$$
holds with probability $\geq 1-\delta$.
% Here $\latentrep^* \coloneqq g(x^*, \algParam_{g, t-1})$.
\end{lemma}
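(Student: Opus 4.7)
The plan is to mirror the discretization argument of \citet{10.5555/3104322.3104451} (Lemma 5.7), combining a covering of the compact latent space with the Lipschitz continuity of $h$, and adapting it to the latent-space-with-neural-network setting.

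First, I would use the Lipschitz assumption on $h(\cdot,\algParam_{h,t-1})$ together with the $\ell_1$-closeness guarantee in \eqnref{eq:discretization}. Specifically, since $h$ is $L$-Lipschitz (here $L$ plays the role of $\lambda$ in \propref{prp:continuous_regret}), for any $\latentrep^*\in\mathcal{Z}$ and its closest discretized point $[\latentrep^*]_t\in\LatentRepSet_t$,
\begin{equation*}
    |h(\latentrep^*,\algParam_{h,t-1}) - h([\latentrep^*]_t,\algParam_{h,t-1})| \leq L\|\latentrep^* - [\latentrep^*]_t\|_1 \leq L\cdot rd/\tau_t = 1/t^2,
\end{equation*}
where the last equality uses the prescribed choice $\tau_t = Lrdt^2$.

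Second, I would apply the finite-set confidence bound of \lemref{lem: discrete_ucb_1} with $\DataSet$ replaced by the discretization $\LatentRepSet_t$, which has cardinality $|\LatentRepSet_t| = \tau_t^d = (Lrdt^2)^d$. With the given $\beta_t = 2\log(\pi_t/\delta) + 2d\log(Lrdt^2) = 2\log(|\LatentRepSet_t|\pi_t/\delta)$, \lemref{lem: discrete_ucb_1} (instantiated on $\LatentRepSet_t$ rather than $\DataSet$) yields, with probability at least $1-\delta$ over all $t\geq 1$ and all points of $\LatentRepSet_t$ simultaneously (via the union bound using $\sum_t \pi_t^{-1}=1$),
\begin{equation*}
    |h([\latentrep^*]_t,\algParam_{h,t-1}) - \mu_{t-1}([\latentrep^*]_t)| \leq \beta_t^{1/2}\sigma_{t-1}([\latentrep^*]_t).
\end{equation*}
Applying the triangle inequality to the two displays above then produces the stated bound.

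The subtle point will be handling the dynamic latent space induced by the retrained neural network $g$. Since $\mathcal{Z}\subset[0,r]^d$ remains compact and convex throughout training by assumption, the grid $\LatentRepSet_t$ of size $\tau_t^d$ covers $\mathcal{Z}$ with $\ell_1$-radius $rd/\tau_t$ regardless of the current $g$, so the discretization-plus-Lipschitz step is insensitive to network retraining. The only genuine coupling with $g$ enters through the posterior quantities $\mu_{t-1}$ and $\sigma_{t-1}$, which are precisely what \lemref{lem: discrete_ucb_1} controls pointwise on $\LatentRepSet_t$. Thus no additional care beyond the monotonicity-of-variance assumption already invoked earlier is needed to close the argument.
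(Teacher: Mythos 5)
Your proposal is correct and follows essentially the same route as the paper's proof: Lipschitz continuity plus the discretization bound of \eqnref{eq:discretization} with $\tau_t = Lrdt^2$ yields the $1/t^2$ term, and \lemref{lem: discrete_ucb_1} applied to $\LatentRepSet_t$ (of size $(Lrdt^2)^d$, which accounts for the extra $2d\log(Lrdt^2)$ in $\beta_t$) gives the confidence width; the triangle inequality finishes. You also correctly read $2\log(\pi_t\delta)$ in the statement as the intended $2\log(\pi_t/\delta)$, consistent with the union bound over $t$.
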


\begin{proof}
Using the Lipschitz-continuity and \eqref{eq:discretization}, we have that
$$\forall \latentrep\in \mathcal{Z}, |h(\latentrep, \algParam_{h, t-1}) - h(\latentrep_t,, \algParam_{h, t-1})| \leq Lrd/\tau_t$$
By choosing $\tau_t=Lrdt^2$, we have $|\LatentRepSet_t| = (Lrdt^2)^d$ and
$$\forall \latentrep\in \mathcal{Z}, |h(\latentrep, \algParam_{h, t-1}) - h(\latentrep_t, \algParam_{h, t-1})| \leq 1/t^2$$
Then using \lemref{lem: discrete_ucb_1}, we finish the proof. %reach the expected result.
\end{proof}

Based on \lemref{lem: discrete_ucb_2} and \lemref{lem: cont_ucb_1}, we could directly obtain the following result.

\begin{lemma} \label{lem: cont_ucb_2}
Pick $\delta \in (0,1)$ and set $\beta=2\log(2\pi_t\delta) + 2d\log(Lrdt^2)$, where  $\sum_{t\geq1}\pi^{-1}_t=1, \ \pi_t > 0$. Then with probability $\geq 1-\delta$, for all $t \in N$, the regret is bounded as follows:
$$r_t\leq 2\beta_t^{1/2}\sigma_{t-1}(\latentrep_t) + 1/t^2$$
\end{lemma}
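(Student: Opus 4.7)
The plan is to derive the one-step regret bound by combining the two confidence events already developed in the excerpt: the discretization-based concentration at the (fixed but unknown) optimum $\latentrep^*$ supplied by Lemma~\ref{lem: cont_ucb_1}, and a matching concentration bound at the sequence of selected points $\latentrep_t$. The UCB greedy rule then lets me cancel the posterior means.

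First, I would apply Lemma~\ref{lem: cont_ucb_1} with confidence parameter $\delta/2$ to obtain, simultaneously for all $t\geq 1$ with probability at least $1-\delta/2$,
\[
h(\latentrep^*, \algParam_{h,t-1}) \leq \mu_{t-1}([\latentrep^*]_t) + \beta_t^{1/2}\sigma_{t-1}([\latentrep^*]_t) + 1/t^2,
\]
where $[\latentrep^*]_t$ denotes the closest point to $\latentrep^*$ in the discretization $\LatentRepSet_t$ of size $(Lrdt^2)^d$. Second, I would establish the companion lower-tail bound at the chosen points $\latentrep_t$: since $\latentrep_t$ is deterministic conditional on $\by_{t-1}$, the Gaussian tail inequality yields $\Pr{|h(\latentrep_t,\algParam_{h,t-1}) - \mu_{t-1}(\latentrep_t)| \geq \beta_t^{1/2}\sigma_{t-1}(\latentrep_t)} \leq e^{-\beta_t/2}$, and a union bound over $t$ (absorbed into the $\pi_t$ factor in $\beta_t$) delivers this event with probability at least $1-\delta/2$. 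By a union bound, both events hold simultaneously with probability at least $1-\delta$.

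Third, I would invoke the UCB selection rule $\latentrep_t = \argmax_{\latentrep}\{\mu_{t-1}(\latentrep) + \beta_t^{1/2}\sigma_{t-1}(\latentrep)\}$, which in particular implies
\[
\mu_{t-1}(\latentrep_t) + \beta_t^{1/2}\sigma_{t-1}(\latentrep_t) \geq \mu_{t-1}([\latentrep^*]_t) + \beta_t^{1/2}\sigma_{t-1}([\latentrep^*]_t).
\]
Chaining this with the upper bound on $h(\latentrep^*,\algParam_{h,t-1})$ from the first step and subtracting the lower bound $h(\latentrep_t,\algParam_{h,t-1}) \geq \mu_{t-1}(\latentrep_t) - \beta_t^{1/2}\sigma_{t-1}(\latentrep_t)$ from the second step gives $r_t = h(\latentrep^*,\algParam_{h,t-1}) - h(\latentrep_t,\algParam_{h,t-1}) \leq 2\beta_t^{1/2}\sigma_{t-1}(\latentrep_t) + 1/t^2$, as claimed.

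The main obstacle is purely bookkeeping on the confidence budget: I must split $\delta$ into two halves, one absorbed by Lemma~\ref{lem: cont_ucb_1} (which accounts for the $|\LatentRepSet_t| = (Lrdt^2)^d$ discretization penalty and is responsible for the $2d\log(Lrdt^2)$ term in $\beta_t$) and one for the concentration at the sequence $\{\latentrep_t\}$ (which needs only a union bound over $t$ via the sequence $\pi_t$). The stated $\beta_t = 2\log(2\pi_t/\delta) + 2d\log(Lrdt^2)$ is precisely what makes both tail sums bounded by $\delta/2$; verifying this calibration is the one nontrivial check, and once in place the rest of the argument is an application of the triangle inequality together with the greedy rule.
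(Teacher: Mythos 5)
Your proposal is correct and follows essentially the same route as the paper: split the confidence budget $\delta/2$ between the discretization-based bound at $\latentrep^*$ (Lemma~\ref{lem: cont_ucb_1}) and the concentration at the selected points, then chain through the UCB selection rule to get $r_t\leq 2\beta_t^{1/2}\sigma_{t-1}(\latentrep_t)+1/t^2$. The only difference is presentational — you spell out the Gaussian tail bound at $\latentrep_t$ explicitly where the paper cites its earlier discrete-set lemma, and you correctly read the $\beta_t$ calibration as $2\log(2\pi_t/\delta)$ rather than the paper's typographical $2\log(2\pi_t\delta)$.
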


\begin{proof}
Using the union bound of $\delta/2$ in both \lemref{lem: discrete_ucb_2} and \lemref{lem: cont_ucb_1}, we have that with probability $1-\delta$:
\begin{align*}
    r_t&=h(\latentrep^*) - h(\latentrep_t) \\
        &\leq \beta_t^{1/2}\sigma_{t-1}(\latentrep_t) + 1/t^2 + \mu_{t-1}(\latentrep_t) - h(\latentrep_t) \\
        &\leq 2\beta_t^{1/2}\sigma_{t-1}(\latentrep_t) + 1/t^2
\end{align*}
which complete the proof.
\end{proof}

Now we are ready to prove Proposition \ref{prp:continuous_regret}.
\begin{proof}[Proof of Proposition \ref{prp:continuous_regret}]
%\rebuttal{slightly rephrase by adding transitioning sentences...} 
%\rebuttal{, we are ready to prove Proposition 1}.
Using \lemref{lem: cont_ucb_2}, we have that with probability $\geq 1-\delta$:
$$\sum_{t=1}^T4\beta_t\sigma^2_{t-1}(\instance_t)\leq C_1\beta_T(\maxInfo_T-\mutualinfo{h(\latentrep_T, \algParam_{h,0}); \indexRVar_T}) \quad \forall T \geq 1$$
By Cauchy-Schwarz:
$$\sum_{t=1}^T 2\beta_t^{1/2}\sigma_{t-1}(\instance_t)\leq \sqrt{C_1\beta_T(\maxInfo_T-\mutualinfo{h(\latentrep_T, \algParam_{h,0}); \indexRVar_T})} \quad \forall T \geq 1$$
Finally, substitute $\pi_t$ with $\pi^2t^2/6$ (since $\sum{1/t^2}=\pi^2/6$). \propref{prp:continuous_regret} follows.
\end{proof}

% \clearpage
\section{Demonstration of the Collision Effect}

\subsection{Visualization of the Collision Effect in the Latent Space}\label{app:visual_collision}

%\begin{figure*}[!h]
%    \centering
%    % \includegraphics[scale=0.38]{./fig/10-01_15-46-39-Collision.png}
%    \includegraphics[scale=0.38]{}
%    \caption{Illustrate the collision and quantify measurement of the collision. Here we propose two quantity measurement of the collision. For the second graph the y axis of the is the ratio of exceeding $|y_1-y_2| > |\latentrep_1 - \latentrep_2| * L$. And for the third graph, the y axis of the third column is the mean of $\lambda = |y_1-y_2| / |\latentrep_1 - \latentrep_2|$.}
%    \label{fig:full_collision}
%\end{figure*}

% \todo {add feynman description}

In this section, we demonstrate the collision effect in the latent space.

\begin{figure}[!h]
     \centering
     \begin{subfigure}[b]{0.49\textwidth}
         \centering
         \includegraphics[width=\textwidth]{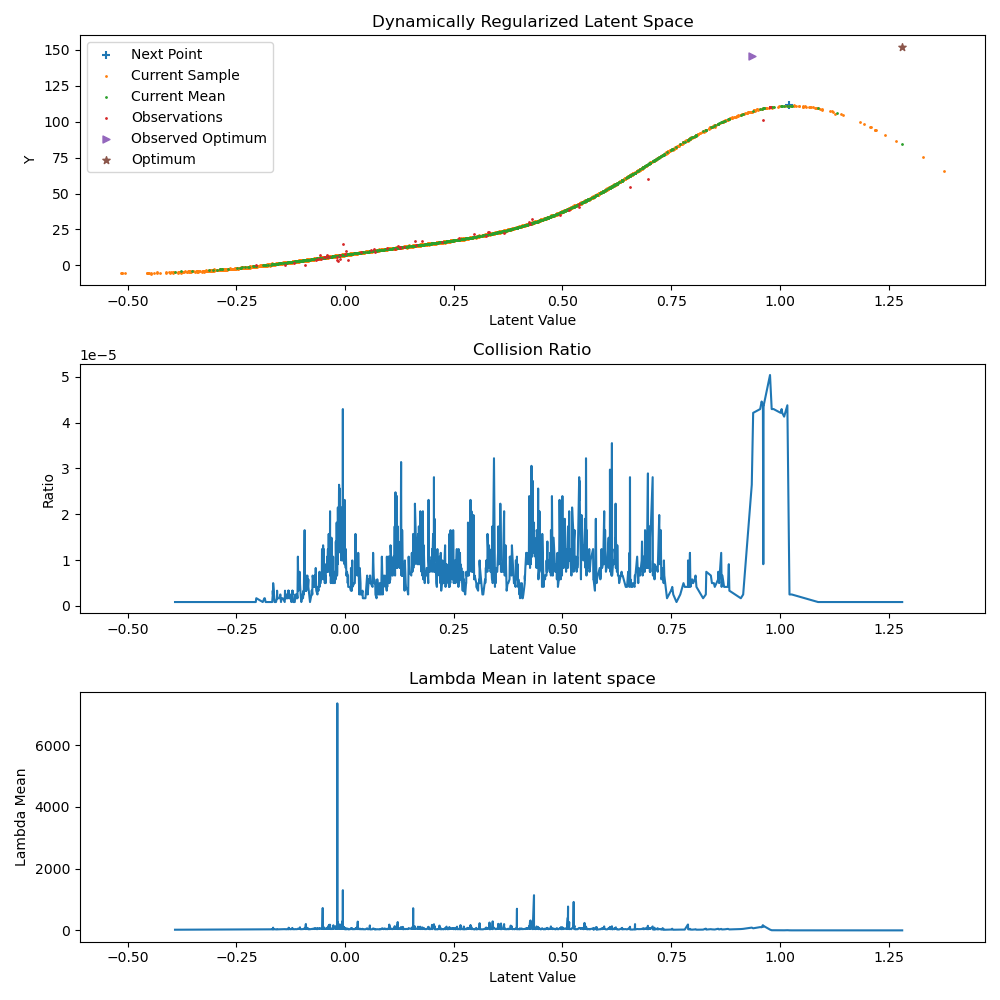}
         \caption{1-D regularized latent space}
         \label{fig:reg_space}
     \end{subfigure}
     \hfill
     \begin{subfigure}[b]{0.49\textwidth}
         \centering
         \includegraphics[width=\textwidth]{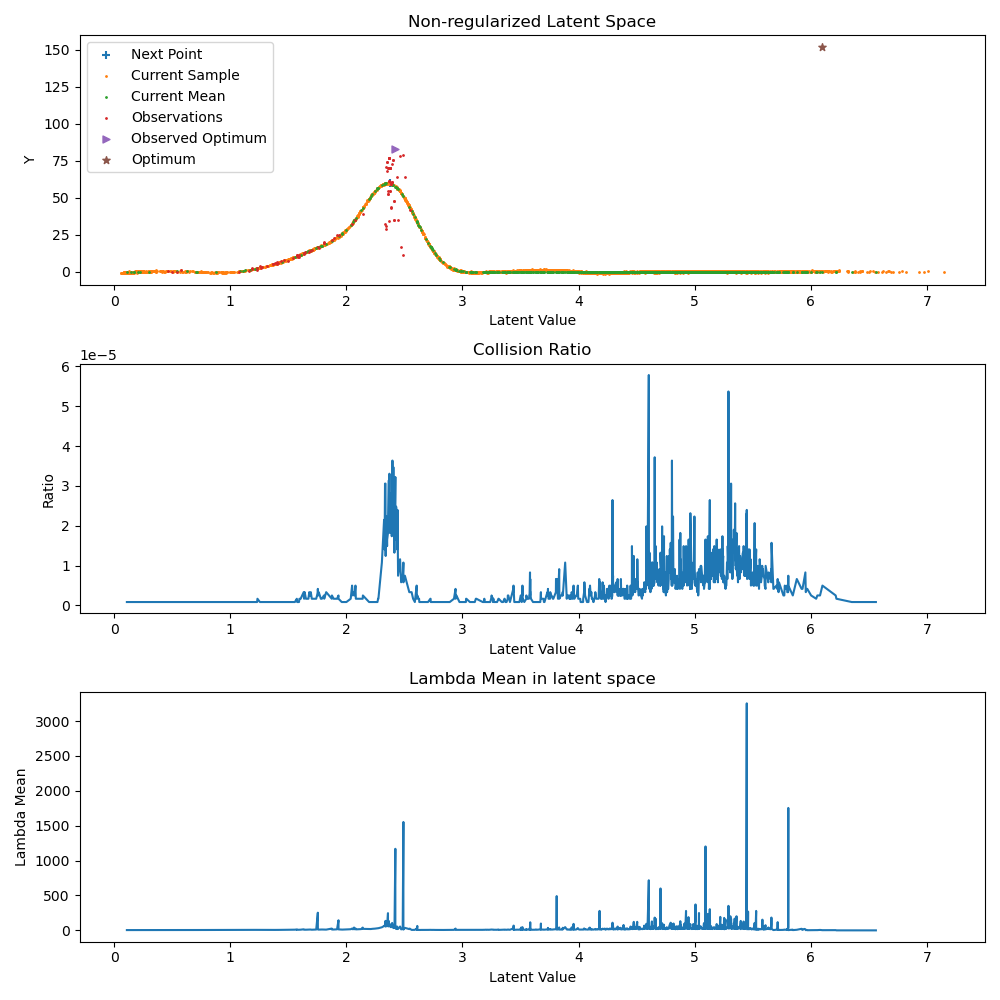}
         \caption{1-D non-regularized latent space}
         \label{fig:non-reg-space}
     \end{subfigure}
     \caption{Illustrate the 1-D latent space of Feynman III.9.52 dataset. The second row shows the ratio that the penalty define as \eqref{eq:penalty} is non-zero. The third row shows point-wise estimation of $\regParam$. \ref{fig:reg_space} shows a regularized latent space with a few observable collisions. \ref{fig:non-reg-space} shows a non-regularized latent space with bumps of collisions especially around the maxima among the observed data points.  Besides, having fewer collisions in the latent space contribute to the optimization through improving the learned Gaussian process. We observe in this comparison that the next point selected by the acquisition function of the regularized version is approaching the global optima, while the next point in the non-regularized version is trying to solve the uncertainty brought by the severe collision near the currently observed maxima.}\label{fig:feynman:collision}
\end{figure}

In \figref{fig:feynman:collision}, we use Feynman dataset which consists of the symbolic regression tasks in physics \citep{Udrescu2020AIFA} and the equation III.9.52 we choose to test is $\regWeight_{\tempParam}=\frac{p_d E_f t}{h/2\pi} \frac{sin((\omega-\omega_0) t/2)^2}{((\omega-\omega_0) t/2)^2}$. We train the same neural network on Feynman dataset with 101 data points which demonstrate the latent space after two retrains with the retrain interval set to be 50 data points. The regularized one employs DW \algname, with the regularization parameter $\rho = 1e^5$, penalty parameter $\lambda=1e^{-2}$, retrain  interval $\tilde{T}$, weighting parameter $\maxInfo=1e^{-2}$ and the base kernel set to be square exponential kernel. The non-regularized one employs LSO.

\begin{figure}[!h]
    % \vspace{-25mm}
    \begin{center}
    \includegraphics[width=\textwidth]{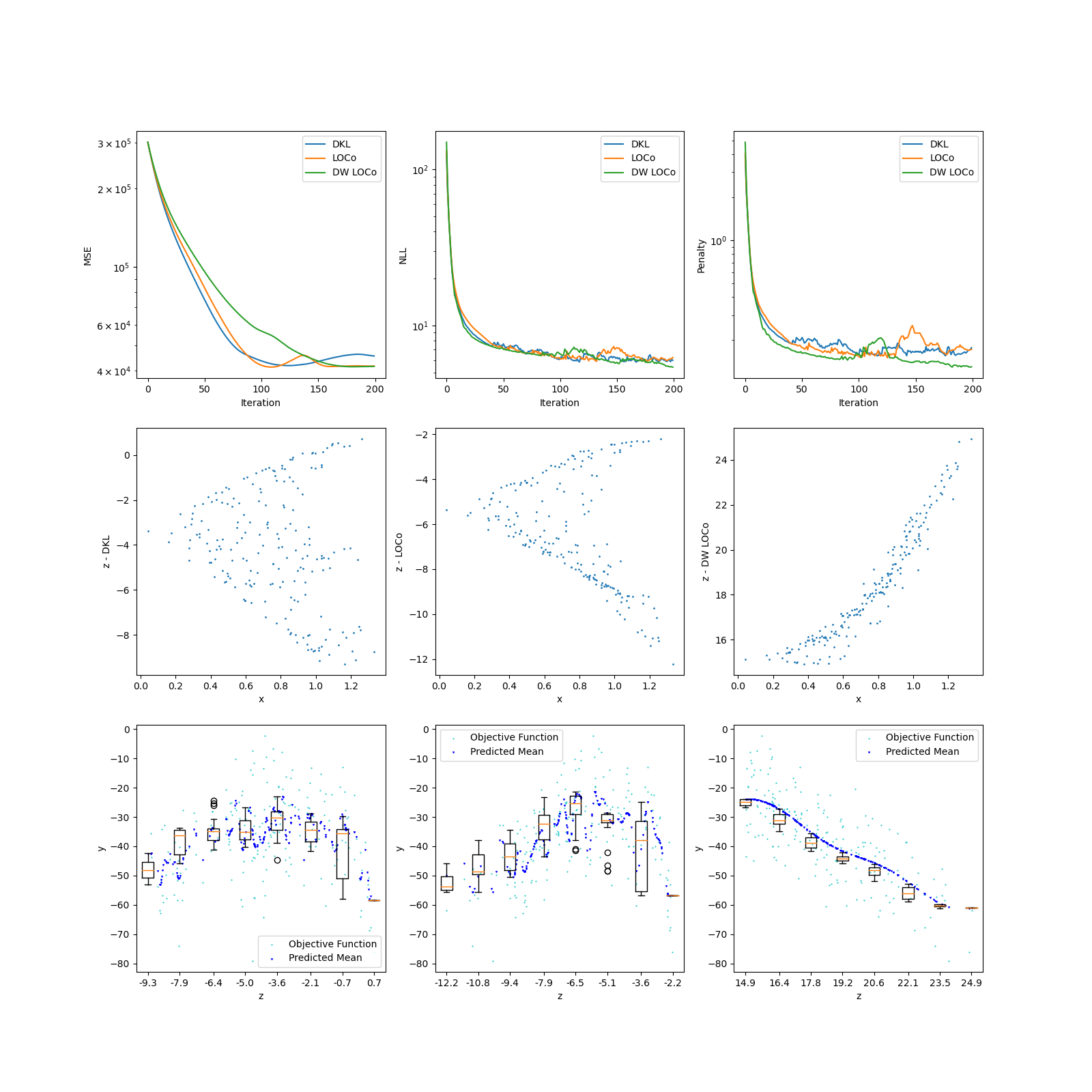}
    \vspace{-10mm}
    \caption{Rastrigin 2D training results. The columns of the bottom two rows corresponds to DKL, \algname and DW \algname. The second row demonstrates the relation between the norm of the 2D input and the corresponding position in the latent space. The third row demonstrates the latent space.}\label{fig:ras2d_training_study_full}
    \end{center}
    % \vspace{-5mm}
\end{figure}

% \clearpage
\Figref{fig:ras2d_training_study_full} shows the collision effect on the Rastrigin 2D dataset. As illustrated in \figref{fig:ras2d_training_study_full}, even after being sufficiently trained after 200 iterations, there are still collisions in the latent space for DKL. Applying the (dynamically weighted) collision penalty reduces the heterogeneous noise in the latent space, while the MSE and NLL are not significantly different. The observation indicates that improvement of collision-mitigation could be insignificant in regression task as it recovers the space as well as the non-regularized learning.

% \clearpage

\subsection{The Collision Effect on Proper Neural Networks}\label{app:collision_vs_nn}
% \todo{add rastrigin figures and elaboration of the measurement}

\rebuttal{In this section, we provide empirical results supporting the claim in \secref{sec:collision} that increasing the network complexity often does \textit{not} help to reduce the collision in the latent space.} The results are summarized in \figref{fig:properNNs}.

\newcommand\scaleVar{.3}
\newcommand\widthVar{0.8}
\begin{figure*}[!h]
    \centering
    \begin{subfigure}[b]{\widthVar\textwidth}
        \centering
        \hspace{-4mm}
        \includegraphics[scale=\scaleVar]{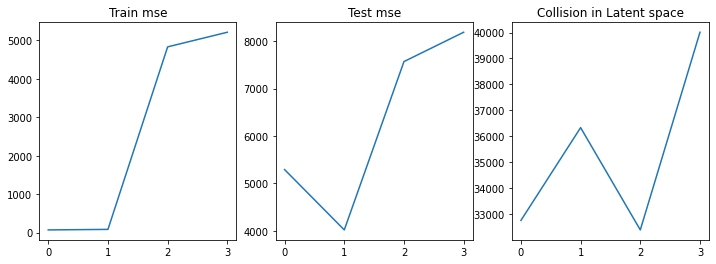}
        \caption{Max Area}
    \end{subfigure}
    \\
    \begin{subfigure}[b]{\widthVar\textwidth}
        \centering
        % \hspace{-12mm}
        % \includegraphics[scale=\scaleVar,trim={30pt 0pt 0pt 0pt}]{./fig/complexity_study/nn_design_study_ras2.png}
        \includegraphics[scale=\scaleVar,trim={30pt 0pt 0pt 0pt}]{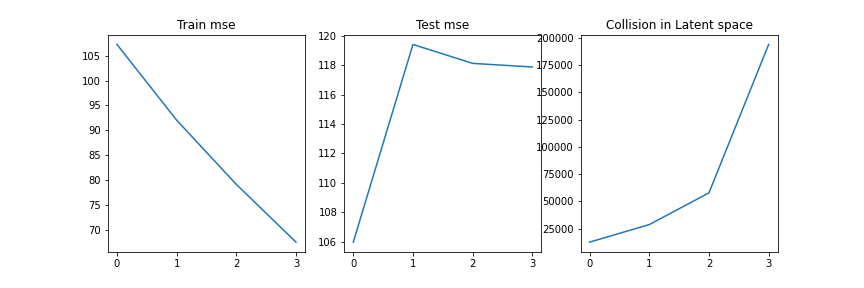}
        \caption{Rastrigin-2D}
    \end{subfigure}
    % \quad
    \\
    % \\
    \begin{subfigure}[b]{\widthVar\textwidth}
        \centering
        \includegraphics[scale=\scaleVar]{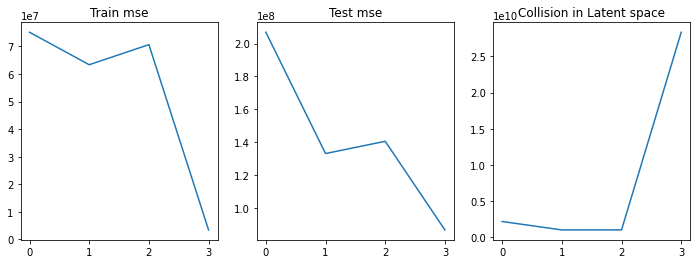}
        \caption{Water Converter}
    \end{subfigure}
    \\
    \begin{subfigure}[b]{\widthVar\textwidth}
        \centering
        \hspace{-5mm}
        \includegraphics[scale=\scaleVar]{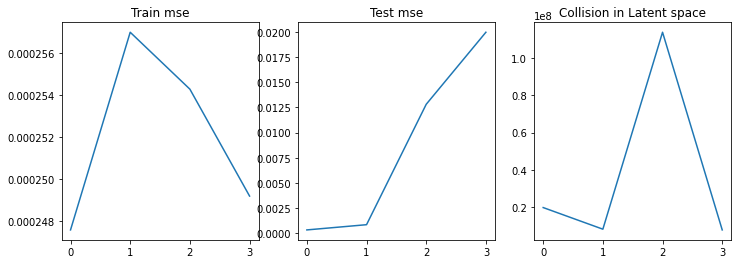}
        \caption{SPOKES}
    \end{subfigure}
    \caption{These curves show the network design test results. The collision value shown here is the penalty term proposed in \eqref{eq:penalty}. The x-axis denotes the neural network's general complexity. The collisions for model with lowest test MSE are still significant.}
    \label{fig:properNNs}
\end{figure*}

% For \textbf{Feynman}, we test the single-layer neural network, which consists of 10, 1001, 5000, or 6000 neurons with Leaky Relu activation functions.
For \textbf{Max Area}, we test the three-layer dense neural network. The first layer consists of 50, 100, 1000 or 1500 neurons with Tanh activation functions. The second layer consists of 50 neurons with Tanh activation functions. The third layer consists of 10 neurons with Leaky Relu activation functions.
For \textbf{Rastrigin-2D}, we test the single-layer neural network, which consists of 10, 100, 1000, or 5000 neurons with Leaky Relu activation functions.
For \textbf{Water Converter}, we test the three-layer dense neural network. The first layer consists of 512, 1024, 2048, or 4096 neurons with Tanh activation functions. The second layer consists of half of the first layer's neurons with Tanh activation functions. The third layer consists of half of the second layer's neurons with Leaky Relu activation functions.
For \textbf{SPOKES}, we test the single-layer neural network, which consists of 10, 100, 1000, or 2000 neurons with Leaky Relu activation functions.

% \clearpage
\section{Supplemental Materials on Algorithmic Details}\label{app:expsetup}
Our implementation of \algname and DW \algname is built upon the open source package GPytorch \citep{gardner2018gpytorch}. %to implement our  algorithm. We apply the squared exponential kernel as the base kernel for all the empirical studies and applied the structured kernel interpolation framework \citep{wilson2015kernel}. For fast sampling from the gaussian process, we used LanczOs Variance Estimates \citep{pleiss2018constant}.
The deep kernel is trained with back propogation. We use the Adam \citep{kingma2014adam} optimizer with learning rate set to be $1e^{-2}$. Below we discuss the detailed configuration of the underlying neural network and the choice of the key parameters used by the algorithm.

\subsection{Algorithmic Details on Neural Network Architecture}
As the primary goal of our paper was to showcase the performance of a novel collision-free regularizer, we pick our network architectures to be basic multi-layer dense neural network (as illustrated by \figref{fig:multilayer-perceptron}). We use a 4-layer dense neural network. Its hidden layers consist of 1000, 500, 50 neurons respectively, each with Leaky Relu activation functions.  The output layer also uses Leaky Relu as its activation function and generates a 1-dimensional output.

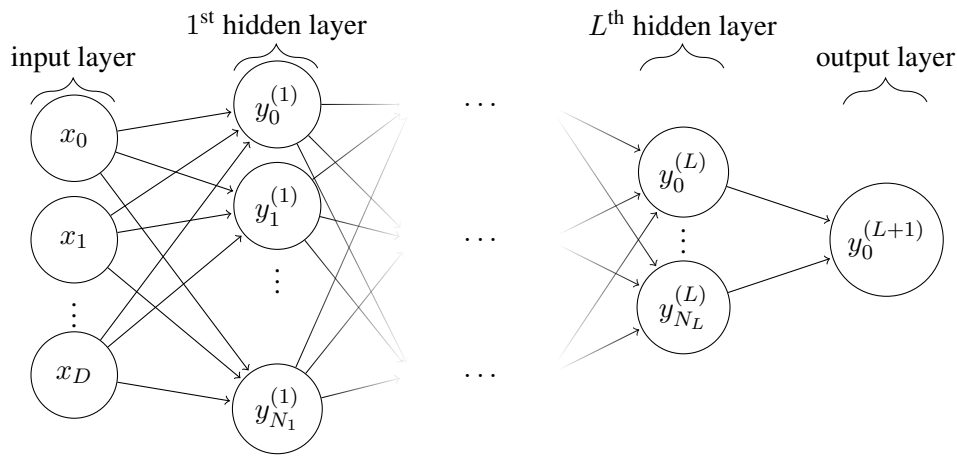
\begin{figure}[!h]
	\centering
	\begin{tikzpicture}[shorten >=1pt, scale=.9]
		\tikzstyle{unit}=[draw,shape=circle,minimum size=1.15cm]
		%\tikzstyle{hidden}=[draw,shape=circle,fill=black!25,minimum size=1.15cm]
		\tikzstyle{hidden}=[draw,shape=circle,minimum size=1.15cm]

		\node[unit](x0) at (0,3.5){$x_0$};
		\node[unit](x1) at (0,2){$x_1$};
		\node at (0,1){\vdots};
		\node[unit](xd) at (0,0){$x_D$};

		\node[hidden](h10) at (3,4){$y_0^{(1)}$};
		\node[hidden](h11) at (3,2.5){$y_1^{(1)}$};
		\node at (3,1.5){\vdots};
		\node[hidden](h1m) at (3,-0.5){$y_{N_1}^{(1)}$};

		\node(h22) at (5,0){};
		\node(h21) at (5,2){};
		\node(h20) at (5,4){};

		\node(d3) at (6,0){$\ldots$};
		\node(d2) at (6,2){$\ldots$};
		\node(d1) at (6,4){$\ldots$};

		\node(hL12) at (7,0){};
		\node(hL11) at (7,2){};
		\node(hL10) at (7,4){};

		\node[hidden](hL0) at (9,3){$y_0^{(L)}$};
		\node at (9,2.1){\vdots};
		\node[hidden](hL1) at (9,1){$y_{N_L}^{(L)}$};

		\node[unit](y1) at (12,2){$y_0^{(L+1)}$};

        \draw[->] (x0) -- (h10);
		\draw[->] (x0) -- (h11);
		\draw[->] (x0) -- (h1m);

        \draw[->] (x1) -- (h10);
		\draw[->] (x1) -- (h11);
		\draw[->] (x1) -- (h1m);

        \draw[->] (xd) -- (h10);
		\draw[->] (xd) -- (h11);
		\draw[->] (xd) -- (h1m);

		\draw[->] (hL0) -- (y1);
		\draw[->] (hL1) -- (y1);

        \draw[->,path fading=east] (h10) -- (h20);
		\draw[->,path fading=east] (h10) -- (h21);
		\draw[->,path fading=east] (h10) -- (h22);

		\draw[->,path fading=east] (h11) -- (h20);
		\draw[->,path fading=east] (h11) -- (h21);
		\draw[->,path fading=east] (h11) -- (h22);

		\draw[->,path fading=east] (h1m) -- (h20);
		\draw[->,path fading=east] (h1m) -- (h21);
		\draw[->,path fading=east] (h1m) -- (h22);

		\draw[->,path fading=west] (hL10) -- (hL1);
		\draw[->,path fading=west] (hL11) -- (hL1);
		\draw[->,path fading=west] (hL12) -- (hL1);
		\draw[->,path fading=west] (hL10) -- (hL0);
		\draw[->,path fading=west] (hL11) -- (hL0);
		\draw[->,path fading=west] (hL12) -- (hL0);

		\draw [decorate,decoration={brace,amplitude=10pt},xshift=-4pt,yshift=0pt] (-0.5,4) -- (0.75,4) node [black,midway,yshift=+0.6cm]{input layer};
		\draw [decorate,decoration={brace,amplitude=10pt},xshift=-4pt,yshift=0pt] (2.5,4.5) -- (3.75,4.5) node [black,midway,yshift=+0.6cm]{$1^{\text{st}}$ hidden layer};
		\draw [decorate,decoration={brace,amplitude=10pt},xshift=-4pt,yshift=0pt] (8.5,4.5) -- (9.75,4.5) node [black,midway,yshift=+0.6cm]{$L^{\text{th}}$ hidden layer};
		\draw [decorate,decoration={brace,amplitude=10pt},xshift=-4pt,yshift=0pt] (11.5,4) -- (12.75,4) node [black,midway,yshift=+0.6cm]{output layer};
	\end{tikzpicture}
	\caption[Network graph for a $(L+1)$-layer dense network.]{Network graph of a $(L+1)$-layer dense network with $D$ input units and $1$ output units. In our experiments, $L$ is set to be 3. Here $y_i^l$ denotes the $i^{th}$ neuron in the $l^{th}$ hidden layer.
	%\yuxin{subscripts and superscripts of $y$'s need to be explained. Why exponential growth of the latent layer size? PS Index not consistent: $L$th layer order is $y_1, y_0$; $1$st hidden layer order is $y_0,\dots, y_{2^L}$.}
	}
	\label{fig:multilayer-perceptron}
\end{figure}

\paragraph{Pre-training of the Neural Network}
%\fengxue{
We use the unlabeled dataset to pre-train an Auto-Encoder and use the parameters of its encoder to initialize the neural network following the protocol described by \citet{ferreira2020using}. Specifically, the encoder shares the same architecture with the neural network we described. Without the pre-training stage, the latent embedding fed by the neural network to the Gaussian process would be random. The practical problem with such randomness could be a much larger variance for the results since it influences the following neural network training process and the optimization process. 
%}
% In practice, it’s always possible to collect pre-training datasets from relative domains when aiming at optimizing unknown objective functions. This pre-training method was also reported in the literature \citep{snoek2015scalable}.

% \clearpage
 
\subsection{Parameter choices}
We investigate the robustness of parameter choices of the regularization parameter \rebuttal{$\regParam$} and $\regWeight$ on the Rastrigin 2D dataset. We show the results in \figref{fig:hyperparameter}.

\begin{figure}[!h]
     \centering
     \begin{subfigure}[b]{0.45\textwidth}
         \centering
        \includegraphics[width=\textwidth]{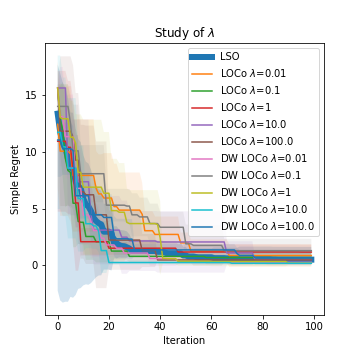}
         \caption{}\label{fig:lambda_study}
     \end{subfigure}
     \begin{subfigure}[b]{0.45\textwidth}
         \centering
        \includegraphics[width=\textwidth]{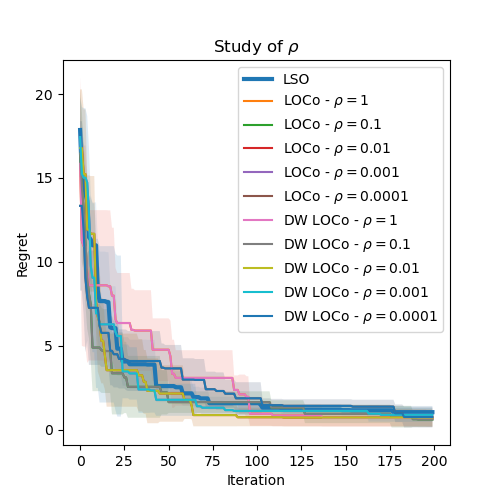}
         \caption{}
         \label{fig:rho_study}
     \end{subfigure}
     \caption{Simple regret under different parameter settings on the Rastrigin 2D dataset. The colored area represents the standard error of the tests at certain iteration. Experiments are repeated eight times for \selffigref{fig:lambda_study} and five times for \selffigref{fig:rho_study}. \rebuttal{\Figref{fig:lambda_study} shows that a moderately large $\lambda$ suffices to achieve decent performance in terms of simple regret. We believe that the wide range of objective values of the test dataset, which otherwise would hurt the optimization performance, can be regularized by the collision penalty.} \Figref{fig:rho_study} shows that a moderately large $\rho$ suffices to achieve decent performance in terms of simple regret. The curves demonstrate the robustness of \algname and DW \algname as long as the parameters are not set to be too extreme.
     % that is large enough could do a good job. We believe that is because the wide range of objective values of the test dataset needs to be mitigated. 
     %The curves demonstrate the decent performance of DW \algname as long as the parameters are not set to be too extreme.
     }\label{fig:hyperparameter}
\end{figure}

% \clearpage

\section{Additional Results}\label{app:additional_results}
\textit{Random EMbedding Bayesian Optimization} (REMBO) \citep{wang2016bayesian} leverages simple random linear transformations to improve the efficiency in low-effective-dimension high-dimensional tasks. We compare \algname and DW \algname with the performance of this random-embedding-based method and empirically exposed the failure case of REMBO when its modeling assumption does not hold (i.e. when dealing with dataset that has large effective dimensions). The results are summarized in \figref{fig:rembo_cmp}.

\begin{center}
     \centering
         \includegraphics[width=\textwidth]{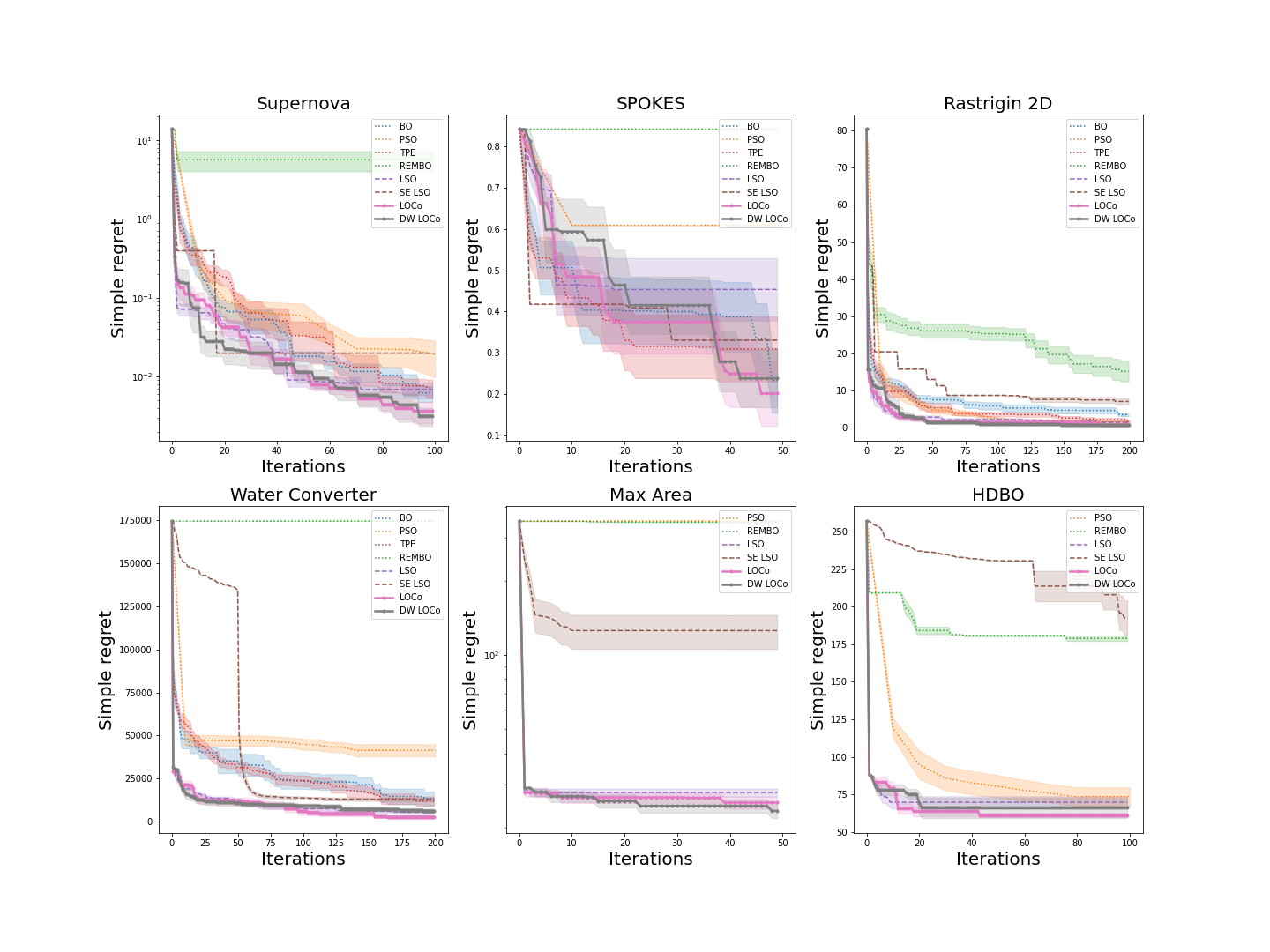}

     \captionof{figure}{Experiment results on six %pre-collected 
     synthetic \& real datasets. Each experiment is repeated at least eight times. The shaded area around the mean curve denotes the standard error.
    %  $\frac{\hat{\sigma}}{\sqrt{n}}$. Here $\hat{\sigma}$ denotes the empirical standard deviation. $n$ denotes the number of cases repeated in experiments. 
     As illustrated in the figure, the random-embedding-based methods have been significantly outperformed by \algname and DW \algname. We place the discussion over REMBO here for two reasons. Firstly, there has been several problems about REMBO as discussed in \secref{sec:related}. Secondly, \rebuttal{the experiments are conducted on tasks where the effective dimensions are at a similar scale as the dimensionality of the original inputs} and doesn't align with the assumption of REMBO. 
}
\end{center} \label{fig:rembo_cmp}

\end{appendix}

\end{document}